\newtheorem{proposition}{Proposition}[section]
\newcommand{\pop}{\beta}
\newcommand{\npos}{\alpha}
\newcommand{\precision}{p}
\newcommand{\recall}{r}
\newcommand{\budget}{c}
\newcommand{\IEA}{\mathit{IE}^{\mathrm{A}}}
\newcommand{\IEB}{\mathit{IE}^{\mathrm{B}}}
\newcommand{\TPA}{T^{\mathrm{A}}}
\newcommand{\TPBOne}{T^{\mathrm{B,1}}}
\newcommand{\TPBTwo}{T^{\mathrm{B,2}}}
\newcommand{\TPB}{T^{\mathrm{B}}}
\newcommand{\rempop}{\beta_{\mathrm{rem}}}
\newcommand{\rempos}{\alpha_{\mathrm{rem}}}
\newcommand{\remprev}{\pi_{\mathrm{rem}}}
\newcommand{\rembud}{c_{\mathrm{rem}}}
\newcommand{\npert}{M}
\newcommand{\ntest}{n}
\newcommand{\nloop}{k}
\newcommand{\Dtst}{\mathcal{D}_{\mathrm{val}}}
\newcommand{\Dtstm}{\Dtst^{(m)}}
\newcommand{\p}{\mathcal{P}}
\newcommand{\A}{\mathcal{A}}
\newcommand{\fstar}{f^\star}
\newcommand{\Dval}{\mathcal{D}_{\mathrm{val}}}
\title{Intervention Efficiency and Perturbation Validation Framework: Capacity-Aware and Robust Clinical Model Selection under the Rashomon Effect}
\title{Intervention Efficiency and Perturbation Validation Framework: Capacity-Aware and Robust Clinical Model Selection under the Rashomon Effect}
\author{
    Yuwen Zhang\textsuperscript{\rm 1},
    Viet Tran\textsuperscript{\rm 2},
    Paul Weng\textsuperscript{\rm 1,\rm 3, \rm 4}\thanks{Corresponding author.}
}
\begin{document}
\maketitle

\begin{abstract}
In clinical machine learning, 
the coexistence of multiple models with comparable performance---a manifestation of the Rashomon Effect---poses fundamental challenges for trustworthy deployment and evaluation. 
Small, imbalanced, and noisy datasets, 
coupled with high-dimensional and weakly identified clinical features, 
amplify this multiplicity and make conventional validation schemes unreliable. 
As a result, selecting among equally-performing models becomes uncertain, 
particularly when resource constraints and operational priorities are not considered by conventional metrics like F1 score.
To address these issues, 
we propose two complementary tools for robust model assessment and selection: 
the Intervention Efficiency (IE) and the Perturbation Validation Framework (PVF).
IE is a capacity-aware metric that quantifies how efficiently a model identifies actionable true positives when only limited interventions are feasible, 
thereby linking predictive performance with clinical utility. 
PVF introduces a structured approach to assess the stability of models under data perturbations, 
identifying models whose performance remains most invariant across noisy or shifted validation sets.
Empirical results on synthetic and real-world healthcare datasets show that using these tools facilitates the selection of models that generalize more robustly and align with capacity constraints,
offering a new direction of tackling the Rashomon Effect in clinical settings.
\end{abstract}

% Uncomment the following to link to your code, datasets, an extended version or similar.
% You must keep this block between (not within) the abstract and the main body of the paper.
% \begin{links}
%     \link{Code}{https://aaai.org/example/code}
%     \link{Datasets}{https://aaai.org/example/datasets}
%     \link{Extended version}{https://aaai.org/example/extended-version}
% \end{links}

\begin{links}
    \link{Code}{https://github.com/YuwenZhang-Peter/PVF-IE}
\end{links}

\section{Introduction}\label{sec:introduction}

Small-scale clinical prediction modeling often faces challenges of limited sample size, 
class imbalance, and high dimensionality, with few established predictors. 
Data scarcity arises from high collection costs, ethical constraints, and follow-up difficulties \cite{coors2017ethical}. 
Class imbalance is common since adverse events affect only a small fraction of patients, 
leading to misleading results if conventional metrics like accuracy are used \cite{davis2021accuracy,he2009learning}. 
In such contexts, clinicians value precision and recall more than overall accuracy, 
especially when only a limited number of interventions can be performed \cite{martin2025use,li2025mortality,gurudevan2024lung}.

Because improving precision often reduces recall, 
an explicit trade-off is essential under constrained resources. 
The F1 score, as the harmonic mean of precision and recall, 
ignores true negatives and can favor suboptimal models when this trade-off is unclear \cite{he2009learning}. 
Threshold-independent metrics mitigate this issue but have drawbacks: AUC-ROC can overestimate performance on imbalanced data, while AUC-PR, though better for rare events, remains sensitive to prevalence and difficult to interpret clinically \cite{saito2015precision,richardson2024receiver}. Metrics such as balanced accuracy, Cohen’s kappa, and MCC have been proposed to address imbalance, 
with MCC noted for robustness \cite{chicco2020advantages}, 
yet all may yield ambiguous results requiring manual interpretation. 
Evaluating many models across several criteria is also computationally costly, 
motivating a measure that captures the precision–recall trade-off while reflecting capacity constraints.

High dimensionality further increases overfitting and obscures relevant predictors \cite{berisha2021digital,bommert2022benchmark},
motivating feature selection or dimensionality reduction \cite{jolliffe2016principal,jia2022feature}. 
Still, models like logistic regression, SVMs, k-nearest neighbors, 
and random forests can achieve similar accuracies while relying on distinct feature sets \cite{lukaszuk2024stability}. 
This phenomenon, called the Rashomon Effect \cite{breiman2001statistical,rudin2024amazing,xin2022exploring}, reveals that many “equally good” but mutually-exclusive models may coexist. 
This multiplicity usually complicated interpretation and clinical trust.

Model evaluation is also hindered by instability across data splits: 
small datasets amplify variance, 
and similar scores can occur by chance \cite{cawley2010over}. 
Traditional validation provides point estimates but neglects robustness, 
though model performance is known to fluctuate under realistic perturbations \cite{varoquaux2018cross,paschali2018generalizability,arleto2023robust}. 
Robustness to data shifts is now recognized as a core requirement for safe deployment in healthcare \cite{paschali2018generalizability,pfohl2019benchmarking,heinrich2022robustness}.

To address these challenges, we propose two complementary tools: 
the \emph{Intervention Efficiency (IE)} and the \emph{Perturbation Validation Framework (PVF)}. 
IE quantifies how effectively a model identifies minority cases when intervention capacity is limited,
making the precision–recall trade-off explicit and clinically meaningful. 
PVF stress-tests models across perturbed validation sets and selects the one with the most stable performance, 
emphasizing robustness and interpretability. 
While introduced alongside IE,
PVF is fully compatible with conventional metrics such as F1 score or accuracy. 
Experiments on synthetic and real medical datasets show that PVF guides model selection toward greater generalization under both IE and traditional metrics. 
Together, IE and PVF provide a resource-aware, robustness-focused evaluation framework that facilitates model selection when the Rashomon Effect appears in small, imbalanced healthcare datasets.

\section{Related Work}\label{sec:relatedwork}

This section discusses issues related to the Rashomon Effect. 
We (i) examine the pitfalls of relying on averaged metrics for small, imbalanced clinical datasets and motivate selecting a single deployable model (\ref{sec:single-deplayable-model}), 
and (ii) review perturbation-based validation for robust model selection while clarifying how our PVF differs from existing approaches (\ref{sec:perturbation-based-methods}).

\subsection{Average Metrics vs. Single Deployable Model}\label{sec:single-deplayable-model}

Small, imbalanced datasets create high variance, overfitting, and instability in model and feature selection \cite{cawley2010overfitting,zhang2023model,xin2022exploring}, 
making it difficult to select one reliable model among many, particularly when the Rashomon Effect appears and causes multiplicity in models. 
In low-sample settings, 
small data perturbations can cause drastic changes in model structure and predictions, 
especially for unstable learners such as decision trees or stepwise regression \cite{breiman1996bagging,breiman1996heuristics,morozova2015comparison,meinshausen2010stability,dietterich2000ensemble}. 
Under class imbalance, models may favor majority classes or overfit rare ones \cite{chawla2002smote,he2009learning,buda2018systematic,vandenGoorbergh2022harm}.

Cross-validation (CV) is widely used to estimate model performance but performs unreliably on small, high-dimensional data. 
Specifically, CV-based selection often yields optimistic estimates and unstable rankings across splits \cite{varma2006bias,cawley2010overfitting,arlot2010survey,varoquaux2018cross,zhang2021cross}. 
Nested CV can reduce bias but remains computationally expensive and unstable when samples are few \cite{wainer2018nested}. 
Moreover, conventional CV averages over models rather than assessing individual consistency.
Likewise, although ensemble and bagging methods reduce variance and improve generalization \cite{breiman1996bagging,breiman2001random}, 
they typically produce complex, less interpretable systems prone to overfitting \cite{dietterich2000ensemble,buhlmann2002analyzing,opitz1999popular}. 
In contrast, 
clinical use prioritizes transparent, single-model predictions for interpretability and deployment \cite{dong2020survey,valente2021improving,zadeh2023comparative}.

Our PVF can be applied on a certain train-validation split. 
Therefore, it can either be combined with cross-validation to improve the robustness of each validation, 
or as an alternative, 
help select a single deployable model. 
It can act as a promising approach to facilitate model selection when the Rashomon Effect appears.

% Citations not used: \cite{zhang2021cross}

\subsection{Perturbation-Based Validation in Small, Imbalanced Clinical Datasets}\label{sec:perturbation-based-methods}

Robustness evaluation often involves injecting random, non-adversarial noise. 
Classic studies added Gaussian noise to features to assess performance degradation \cite{egmont1997robustness,shafiee2019impact,parmar2015radiomic,yang2020ecg,pfohl2021impact,kulkarni2021gaussnoise}. 
Recent gene–expression experiments found that logistic regression retained the most stable feature sets, 
while Random Forests were least stable, 
despite comparable accuracy \cite{lukaszuk2024stability}. 
This shows that conventional metrics alone can overlook robustness to noise.

For interpretable models, robustness work is emerging. 
For example, loss-function redesign improves single-tree resilience to label noise \cite{sztukiewicz2024tree}, 
yet trees in small-sample settings can still memorize noisy outliers, 
and ensemble averaging may leave feature importance unstable \cite{soloff2024bagging}.

Perturbation-based validation explicitly tests stability against noise. 
Label-noise injection, as in Mutation Validation (MV) \cite{zhang2023model}, 
flips a subset of labels to measure performance drop.
Feature noise assesses covariate sensitivity \cite{egmont1997robustness,kulkarni2021gaussnoise}.
Resampling methods, such as bootstrap, repeated k-fold CV and bagging, 
perturb sample composition to produce performance distributions rather than single estimates \cite{breiman1996bagging,breiman2001random,soloff2024bagging}.

Our Perturbation Validation Framework (PVF) differs by applying feature-level noise only to the validation set, 
directly evaluating generalization under noisy conditions without retraining on corrupted data. 
Aggregating results from multiple perturbations identifies the model with the most consistent and stable predictions—a crucial aspect when addressing the Rashomon Effect.

We also considered label perturbation during validation. 
Prior work shows that robust models should withstand limited label noise in training \cite{zhang2023model}.
However, introducing label noise in validation can distort comparisons. 
Flipped labels disproportionately penalize stronger models (which previously predicted correctly) while leaving weaker ones largely unaffected, 
compressing or inverting true performance gaps. 
In highly imbalanced datasets, 
even a few flipped labels can drastically alter metrics like precision and recall. 
Therefore, PVF excludes label perturbation in validation, despite its popularity for robustness testing during training.

\section{Methodologies}
\label{sec:methodologies}

% This section comprises three parts.
% \Cref{sec:ie} introduces \emph{Intervention Efficiency} (IE), a metric that compares model–guided intervention with a random policy under limited intervention capacity.
% \Cref{sec:pvf} presents the \emph{Perturbation Validation Framework} (PVF), a general framework for robustness assessment that can be combined with any performance metric.
% \Cref{sec:pvf-ie} explains how we employ PVF with IE in our empirical study.

This section has two parts: 
\Cref{sec:ie} proposes \emph{Intervention Efficiency} (IE), 
comparing model-guided and random interventions under limited capacity; 
\Cref{sec:pvf} introduces the \emph{Perturbation Validation Framework} (PVF) for robust model assessment and model selection.

%-------------------------------------------
\subsection{Intervention Efficiency (IE)}
\label{sec:ie}
%-------------------------------------------

In many applied domains such as clinical follow-up, fraud investigation, and social services, 
only a fixed budget \(\budget\) of interventions can be performed. 
In practice, 
this means that only a subset of the at-risk population can be targeted due to limited capacity.

A benchmark policy, 
referred to as \emph{uniform intervention}, 
randomly selects \(\budget\) individuals from the population without any predictive guidance. 
In contrast,
a \emph{model-guided intervention} uses a binary classifier to prioritize individuals for action. 
Two situations may occur:
\begin{enumerate}
    \item The model predicts more positives than \(\budget\), and only the top \(\budget\) flagged cases can be intervened upon.
    \item The model predicts fewer positives than \(\budget\), and after acting on all predicted positives, the remaining capacity is allocated through uniform intervention on the rest of the population.
\end{enumerate}

Because uniform intervention represents a fixed baseline under identical problem setting and capacity constraint, 
it provides a natural reference for evaluating model-guided strategies. 

We define the \emph{Intervention Efficiency (IE)} as the ratio between the expected number of true positives captured by a model-guided intervention and that captured by uniform intervention, 
given an intervention capacity \(\gamma = \budget / \pop\), 
where \(\pop\) denotes the population size. 
IE thus quantifies how effectively a predictive model utilizes limited resources to identify positive cases: 
higher IE indicates a greater number of true positives captured within the same budget.

In the following proposition, 
we show that the IE of a model \(f\) depends only on its precision \(\precision\), 
recall \(\recall\), 
the prevalence rate \(\pi\) (i.e., the percentage of positive cases in the population), 
and the intervention capacity \(\gamma\).

\begin{proposition}[Closed form of \(\mathrm{IE}\)]
\label{prop:IE-closed-form}
Assume that there is a model \(f\) that can predict binary labels and it is used for model-guided intervention. The problem has an overall prevalence rate \(\pi\) and only a fraction \(\gamma\) of the whole population can be intervened due to limited capacity. The Intervention Efficiency of \(f\) on this problem under intervention capacity \(\gamma\), denoted as \(\mathrm{IE}_\gamma(f)\), satisfies
\[
  \mathrm{IE}_\gamma(f)
  =
  \frac{s \precision + (\gamma - s)\,\dfrac{\pi - s \precision}{1 - s}}{\gamma \pi},
\]
where \(\precision\) is the precision of \(f\) and \(s = \min\!\bigl(\gamma, \tfrac{\pi \recall}{\precision}\bigr)\), with \(\recall\) being the recall of \(f\).
\end{proposition}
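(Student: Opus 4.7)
The plan is to compute the expected numbers of true positives captured by the two strategies and take their ratio, verifying that the two cases in the definition of model-guided intervention collapse into the single closed form via $s = \min(\gamma, \pi \recall/\precision)$.

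\paragraph{Step 1: uniform baseline.}
First I would handle the denominator. A uniform intervention picks $\budget = \gamma \pop$ individuals at random from a population with prevalence $\pi$. Each selected individual is positive independently with probability $\pi$, so the expected number of true positives is simply $\gamma \pop \pi$. This is the quantity $\gamma\pi$ (per capita) that appears in the denominator of the claimed formula.

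\paragraph{Step 2: translate precision and recall into predicted-positive counts.}
Next I would express how many individuals the model flags. By definition $\recall = \mathrm{TP}/(\pop\pi)$ and $\precision = \mathrm{TP}/\mathrm{PP}$, where $\mathrm{PP}$ is the number of predicted positives. Eliminating $\mathrm{TP}$ gives $\mathrm{PP} = \pop\pi\recall/\precision$, so the fraction of the population flagged by $f$ is $\pi\recall/\precision$. Comparing this with $\gamma$ tells us which of the two intervention regimes we are in, and motivates defining $s = \min(\gamma, \pi\recall/\precision)$ as the actual fraction of the population treated on model guidance.

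\paragraph{Step 3: expected true positives under model guidance.}
I would then compute the expected true positives in each regime and show they unify.
\begin{itemize}
\item If $\pi\recall/\precision \ge \gamma$ (capacity-bound regime), $s = \gamma$; we pick $\gamma\pop$ individuals uniformly from the flagged set, each being a true positive with probability $\precision$, yielding expected TP equal to $\gamma\pop\precision = s\precision \pop$. The residual-capacity term $(\gamma - s)(\pi - s\precision)/(1-s)$ vanishes.
\item If $\pi\recall/\precision < \gamma$ (model-bound regime), $s = \pi\recall/\precision$; the model contributes $s\precision\pop$ true positives, and then the remaining $(\gamma - s)\pop$ interventions are performed uniformly on the $(1-s)\pop$ unflagged individuals. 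The residual positive count is $\pop\pi - s\precision\pop$, so the residual prevalence is $(\pi - s\precision)/(1-s)$, and the additional expected true positives are $(\gamma - s)\pop\,(\pi - s\precision)/(1-s)$.
\end{itemize}
Summing these and dividing by the uniform baseline $\gamma\pop\pi$ yields exactly the stated expression.

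\paragraph{Main obstacle.}
The only delicate step is the residual-prevalence bookkeeping in the model-bound case: one must be careful that the positives removed from the complement are exactly the true positives the model already found ($s\precision\pop$, not $s\pop$ or $\pop\pi\recall$ again), and that the conditional prevalence on the unflagged remainder is the appropriate mean used in the linearity-of-expectation argument. Once that is checked, the unification across regimes is purely algebraic since the residual term vanishes when $s = \gamma$, so a single closed form covers both cases.
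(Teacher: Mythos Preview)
Your proposal is correct and follows essentially the same argument as the paper: split into the capacity-bound and model-bound regimes, compute the expected true positives in each (with the residual uniform sampling in the second regime), and unify via $s=\min(\gamma,\pi\recall/\precision)$. The only cosmetic difference is that the paper first works in raw counts (using $\pop,\npos,\budget$) and then divides through by $\pop$ at the end, whereas you work directly in population fractions from the start.
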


We refer readers to \Cref{proof of prop:IE-closed-form} for the detailed proof of \Cref{prop:IE-closed-form}. 
In practice, we can approximate the true $\pi$ by the prevalence rate $\hat{\pi}$ in the specific set of samples to which the model is applied.
For instance, 
when estimating IE on a validation set, 
even with the whole dataset that can provide a "more accurate estimation" for $\pi$, 
we should use the $\hat{\pi}$ from the specific validation set rather than the prevalence rate from the whole dataset. 
The reason is that IE is tied to precision and recall, 
two values depending not only on the model but also largely on the set of samples that yields them, especially in small sample cases.
In contrast, in large sample cases, 
$\hat{\pi}$ has been a good estimate of true $\pi$ given enough samples.
In sum, it is both reasonable and essential to estimate $\pi$, $\precision$ and $\recall$ on the same set to ensure they are aligned, 
since they are jointly dependent on each other.
Practically, when computing IE on a specific set of samples $\mathcal{D}$, 
it not only depends on a model $f$ and a $\gamma$ of interest, 
but also on $\mathcal{D}$, where $\hat{\pi}$, $\hat{\precision}$ and $\hat{\recall}$ (estimators of $\pi$, $\precision$ and $\recall$) are derived.
Consequently, we can rewrite the formula for IE in real practice as:
\[
\mathrm{IE}_\gamma(f, \mathcal{D})
=
\frac{\hat{s} \hat{\precision} + (\gamma - \hat{s})\,\dfrac{\hat{\pi} - \hat{s} \hat{\precision}}{1 - \hat{s}}}{\gamma \hat{\pi}},
\]
where $\mathcal{D}$ is a set of samples the model $f$ is applied on for evaluation,
$\hat{\pi}$ is the prevalence rate in $\mathcal{D}$, 
$\hat{\precision}$ is the precision of $f$ when evaluated on $\mathcal{D}$,
and $\hat{s} = \min\!\bigl(\gamma, \tfrac{\hat{\pi} \hat{\recall}}{\hat{\precision}}\bigr)$, 
with $\hat{\recall}$ being the recall of $f$ when evaluated on $\mathcal{D}$.

As for $\gamma$, 
to deploy a model trained on distribution 1 in distribution 2, 
the IE used during model selection on distribution 1 should be equipped with the $\gamma$ corresponding to distribution 2. 
This aligns selection with the intervention capacity of the target setting, improving generalization to the distribution 2 user case.

%-------------------------------------------
\subsection{Perturbation Validation Framework (PVF)}
\label{sec:pvf}
%-------------------------------------------

During model selection, the goal is not only to identify a model that performs well on the validation set, but also one that remains robust under practical noise that may be present in the data. Perturbed Validation Framework (PVF) provides a protocol for robustness assessment and model selection.
Starting from a fixed validation set, it generates multiple perturbed versions via adding noise on each sample, evaluates each candidate model on every perturbed set using a chosen metric, and aggregates the resulting scores into a single robustness-adjusted criterion.

\subsubsection{Setup}
Let the original validation set be
\(
  \Dval=\{(\bm x_i,y_i)\}_{i=1}^{n}
\),
where each $(\bm x_i,y_i), i = 1, 2, ..., n$ is a multi-dimensional sample with features $\bm x_i$ and a binary label $y_i$,
while $n$ is the total number of samples in $\Dval$.
Let \(\mathcal F\) be a finite set of trained candidate models.
The goal is to select the most robust model $f\in\mathcal{F}$, denoted as $f^\star$, based on the model performance on $\Dval$.

\subsubsection{Perturbation mechanism} \label{subsubsec:perturbation-mechanism}
Each feature of each sample is perturbed independently based on its type, with labels unchanged. Below, we give a couple of examples of perturbation.
\begin{enumerate}%[label=(\alph*), leftmargin=*, itemsep=2pt]
  \item For numeric feature \(x\) in \(\bm x\),  
        add small scalar noise drawn from a user-specified distribution.  
        For example, in our experiments, we draw \(\varepsilon\sim\mathcal N(0,\sigma^2)\) and set \(x' = x + \varepsilon\).

  \item For categorical (nominal) feature \(x\) in \(\bm x\), 
        with a user-controlled flip probability, change \(x\) to another category sampled from a specified mass function over the remaining categories.  
        For example, in our experiments, with probability \(1-\xi\), keep \(x'=x\). Otherwise, choose \(x'\neq x\) uniformly from the remaining categories, i.e., 
        \[
        \Pr(x'=a\mid x)
        =
        \xi \cdot \frac{1}{(|\mathcal C|-1)}
        \]
        for \(a\neq x\), where \(|\mathcal C|\) is the number of categories feature \(x\) can have.

  \item For ordinal feature \(x\) in \(\bm x\), 
        move along a predefined order with probability mass concentrated on nearby categories, as controlled by a decay/temperature parameter.  
        For example, in our experiments, with probability \(1-\xi\), keep \(x'=x\). Otherwise, for a pre-determined distance \(d(\cdot,\cdot)\) on the order, sample \(x'\neq x\) with probability
        \[
          \Pr(x'\mid x)
          =
          \xi \cdot \frac{\exp\{-\lambda\,d(x,x')\}}
               {\sum_{a\neq x}\exp\{-\lambda\,d(x,a)\}}, 
        \]
        where \(a\) is any order \(x\) can have and \(\lambda\) is the decay parameter.
\end{enumerate}

\subsubsection{Constructing perturbed validation sets}
Suppose \(M\) perturbed validation sets are desired. Then, for each \(m=1,\dots,M\), form the \(m\)-th perturbed validation set \(\Dval^{(m)}\) by generating exactly \(k\) independent replicas of every \((\bm x_i,y_i)\) using the mechanism above. Thereby, \(|\Dval^{(m)}|=kn\). The reason for having an integer \(k\)  here is to make sure every sample is perturbed by equal times, making the overall distribution and class imbalance preserved. Also, \(k\) should be sufficiently large in order to let various perturbed versions of the original sample emerge in the perturbed validation sets, given different perturbation patterns for different features and a resulting complex composition of them.

\subsubsection{Evaluation, aggregation, and selection}
Let \(\mathrm{Eval}\) be any scalar metric that maps a model and a dataset to a performance score. 
For each candidate model \(f\in\mathcal F\) and \(m=1,\dots,M\), the performance of \(f\) on the \(m\)-th perturbed validation set \(\Dval^{(m)}\) can be expressed as
\(
P_f^{(m)}=\mathrm{Eval}\bigl(f,\Dval^{(m)}\bigr)
\).
Aggregate \(\bm P_f=(P_f^{(1)},\dots,P_f^{(M)})\) via \(\mathcal A\) to obtain \(A_f=\mathcal A(\bm P_f)\), where  \(A_f\) is the aggregated performance score of \(f\) on the \(M\) perturbed validation sets and \(\mathcal A\) is a user-specified aggregation function (e.g., the 25-th percentile of the \(M\) scores). The overall workflow for evaluating a single model is summarized in \Cref{fig:pvf-workflow}.

\begin{figure*}[ht]
  \centering
  \includegraphics[width=\textwidth]{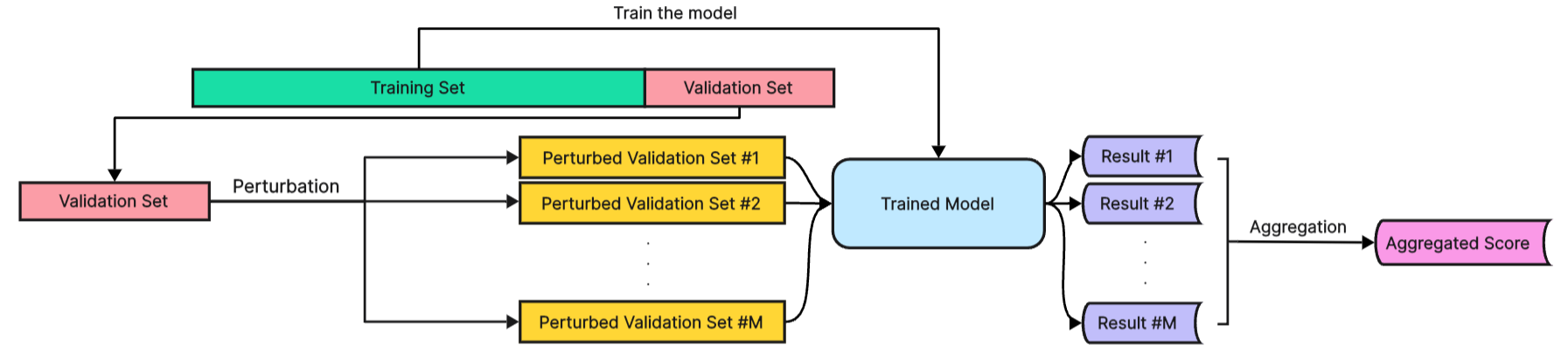}
  \caption{Overview of the Perturbation Validation Framework (PVF). After training a model on the training split, the original validation set is repeatedly perturbed (by adding noise to numeric features, changing categories for nominal features, or changing orders for ordinal features) to yield \(M\) validation sets. The trained model is evaluated on each perturbed validation set, producing \(M\) scores, which are then aggregated into a single robustness metric score.}
  \label{fig:pvf-workflow}
\end{figure*}

\begin{figure*}[ht]
  \centering
  \includegraphics[width=\textwidth]{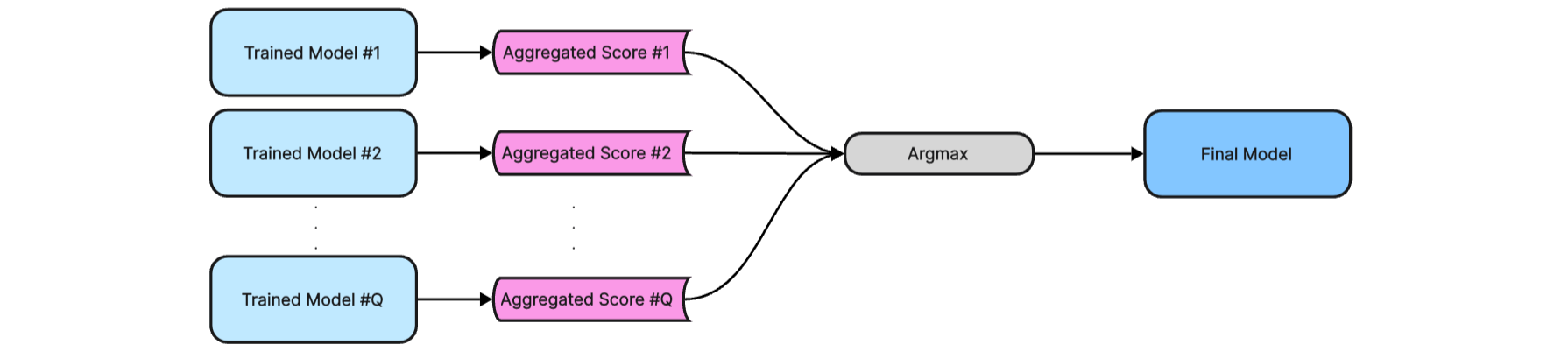}
  \caption{Model selection stage. Each candidate \(f\) yields a final aggregated score via PVF. The model \(f^\star\) with the highest aggregated score is chosen.}
  \label{fig:pvf-selection}
\end{figure*}

Finally, select
\(
  f^\star\in\arg\max_{f\in\mathcal F} A_f
\),
as shown in \Cref{fig:pvf-selection}. The complete implementation pseudocode of combining PVF with any evaluation metric \(\mathrm{Eval}\), denoted as PVF-\(\mathrm{Eval}\), appears in \Cref{alg:pvf-eval}.

\subsubsection{Hyperparameters and user-specified functions}\label{pvf-table}

For clarity, we provide the following list of components in PVF that can be changed according to specific user cases:

\begin{enumerate}[label=(\roman*), leftmargin=*, itemsep=2pt]
  \item \(d\): number of (or, which) features to be perturbed.
  \item \(k\): number of replicas per original sample (so each perturbed set has size \(kn\)).
  \item \(M\): number of perturbed validation sets.
  \item A collection of perturbation controls (e.g., noise standard deviation, flipping probabilities).
  \item \(\mathcal A:\mathbb R^M\to\mathbb R\): aggregation function (e.g., a lower quantile).
\end{enumerate}

Since no retraining is required, 
the per‐model computational complexity is \(\mathcal O(Mknd)\), 
and across all \(Q\) models it is \(\mathcal O(QMknd)\).
We refer readers to \Cref{theoretical guarantees of pvf} for theoretical guarantees of PVF.

\begin{algorithm}
  \caption{PVF-\(\mathrm{Eval}\) Model Selection}
  \label{alg:pvf-eval}
  \begin{algorithmic}[1]
    \REQUIRE Original validation set $\Dval=\{(\bm{x}_i,y_i)\}_{i=1}^{n}$;
             candidate model set $\mathcal{F}$;
             evaluation metric \textsc{Eval};
             number of perturbed sets $M$;
             replicas per sample $k$;
             a collection of perturbation controls;
             aggregation function $\mathcal{A}$
    \ENSURE Selected most robust model $f^\star$ on $\Dval$
    \FORALL{$f \in \mathcal{F}$}
      \FOR{$m=1$ \TO $M$}
        \STATE Build $\Dval^{(m)}$ by independently perturbing each feature
               of each $(\bm{x}_i,y_i)$ as specified in Section~\ref{sec:pvf},
               creating $k$ replicas per original sample.
        \STATE $P_f^{(m)} \gets \textsc{Eval}\bigl(f;\Dval^{(m)}\bigr)$
      \ENDFOR
      \STATE $A_f \gets \mathcal{A}\bigl(P_f^{(1)},\ldots,P_f^{(M)}\bigr)$
    \ENDFOR
    \STATE $f^\star \gets \arg\max_{f\in\mathcal{F}} A_f$
    \RETURN $f^\star$
  \end{algorithmic}
\end{algorithm}

\section{Experiments}
\label{sec:experiments}

This section outlines the objectives, general setups, and overall findings of our experiments on both synthetic and real datasets. Detailed descriptions, full quantitative results, and sensitivity analyses are provided in the Appendix. Specifically, \Cref{sec:synthetic-setups-results} present the detailed setups and results for the synthetic dataset experiments; \Cref{sec:real-setup} presents the detailed setups for real datasets; \Cref{sec:sensitivity-analysis} reports a comprehensive hyperparameter sensitivity analysis.

\subsection{Research Questions}
To evaluate the effectiveness and robustness of our proposed method, we designed a set of experiments guided by the following four research questions:

\begin{itemize}
    \item[\textbf{Q1.}] Can the proposed PVF be more effective than the traditional method (model selection based on a single held-out validation set) under traditional metrics (here, F1 Score and accuracy) on synthetic data (\Cref{sec:synthetic-results-brief} and \Cref{sec:synthetic-results})?
    \item[\textbf{Q2.}] Is PVF more effective than the traditional method under IE across different intervention fractions ($\gamma$) on synthetic data (\Cref{sec:synthetic-results-brief} and \Cref{sec:synthetic-results})?
    \item[\textbf{Q3.}] On real clinical datasets, does PVF select models that perform better than those selected by the traditional approach under the same metrics used in the synthetic dataset experiments (\Cref{sec:real-results})?
    \item[\textbf{Q4.}] How sensitive is PVF to its hyperparameters, with a focus on the perturbation controls (\Cref{sec:sensitivity-analysis})?
\end{itemize}

These questions allow us to compare PVF against the traditional validation method, examine its performance under both synthetic and real clinical data, and assess its robustness to experimental settings.

\subsection{General Experimental Setup}

In this section, 
we introduce the general experimental setups for both the synthetic dataset experiments and the real dataset experiments, 
along with the rationales behind the designs.

We design our experiments around the task of model selection---choosing a single model from a pool of candidates based on its validation performance. This setting commonly arises in the presence of the Rashomon Effect. The aim is to evaluate whether PVF can yield more reliable choices than a conventional single-split validation. The reason for not including more sophisticated frameworks like cross-validation is that PVF can be applied on any single-split train-validation settings, thus can be combined with these frameworks, as stated in \Cref{sec:single-deplayable-model}.

Evaluation is organized by fixing one metric at a time and comparing PVF against the traditional method under that metric. Intervention Efficiency (IE) is reported at different intervention capacity $\gamma$ to represent different clinical resource levels. We also report F1 Score (or supplemented with accuracy) as a complementary analysis, given that these two traditional metrics are widely adopted and emphasized in many clinical applications. Note that the focus is on which selection strategy chooses better models given a chosen metric, instead of whether IE or any traditional metric (F1 Score, accuracy) is better for model selection, as the choice of performance metric is case-specific. 

After selection, the chosen model is either assessed based on ground truth information or when it is not available, as an approximation of it, on a large external test set using the same metric as in validation. To improve reliability and reduce variance from any single split, each configuration is repeated extensively. Model classes and other dataset-specific design choices are kept simple and interpretable to mimic clinical use  \cite{stiglic2020interpretability}, 
and are described in subsequent subsections.

\subsection{Overview of Experimental Results}
\label{sec:overview-results}

This section presents an overview of our key findings, highlighting the probability of PVF outperforming the traditional validation method across all synthetic experimental configurations, as well as its benefits on real datasets after parameter tuning. Comprehensive descriptions of the experimental settings, configurations, detailed results, and sensitivity analyses are provided in the Appendix.

\subsubsection{Synthetic Dataset Experiments}
\label{sec:synthetic-results-brief}

As shown in \Cref{fig:synthetic-overview}, PVF consistently outperforms the traditional validation method across most configurations (the configurations are elaborated in \Cref{sec:synthetic-setup}). 
When $\gamma = 0.1$ and $\gamma = 0.3$, PVF achieves superior selections in approximately 90\% of all configurations, 
indicating PVF's advantage under limited intervention capacities. 
Although the advantage narrows as $\gamma$ increases ($\gamma \ge 0.7$), PVF still maintains a clear lead.

\begin{figure}[t]
    \centering
    \includegraphics[width=\linewidth]{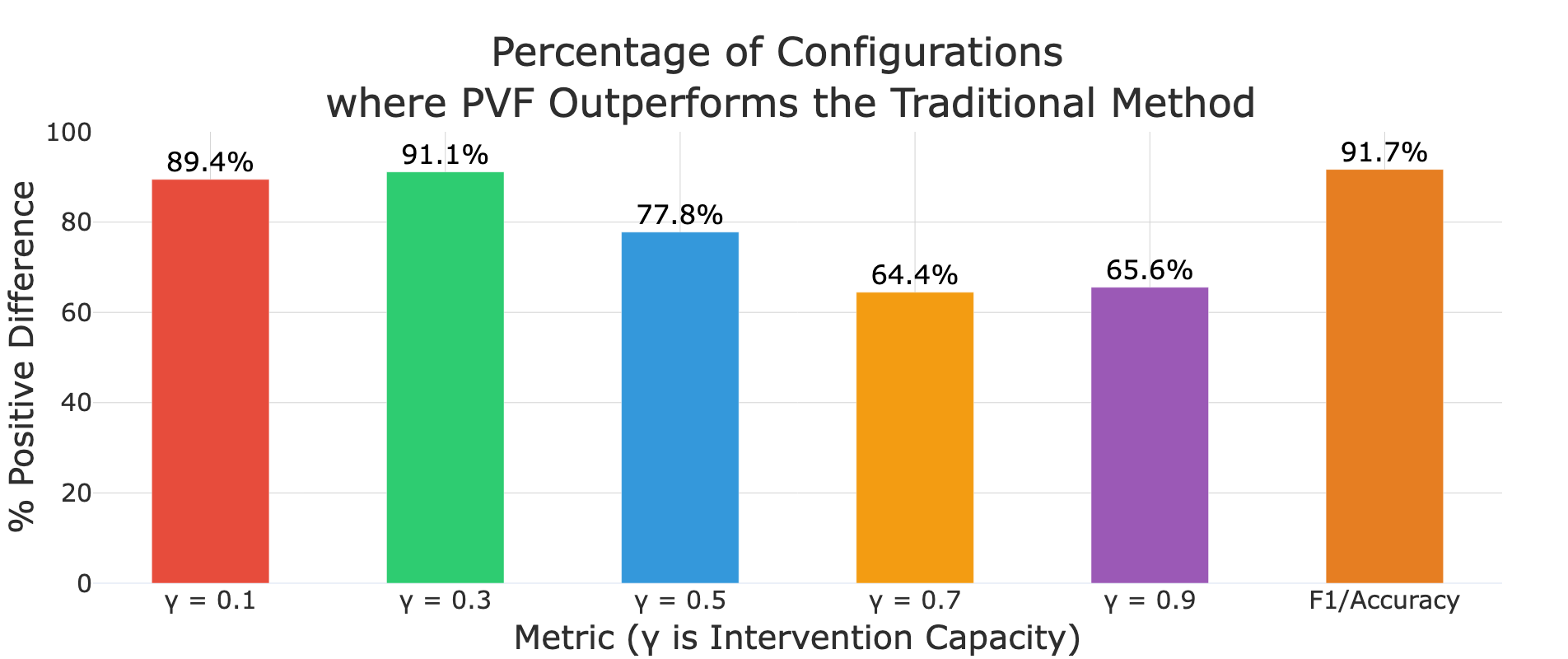}
    \caption{Percentage of configurations in synthetic experiments where PVF outperforms the traditional method across all metrics. "\% Positive Difference" shows the percentage of configurations in which PVF more often selects the best model than the traditional method does across all repetitions.}
    \label{fig:synthetic-overview}
\end{figure}

\subsubsection{Real-World Dataset Experiments}

\label{sec:real-results}

In this subsection, 
similar to the synthetic dataset experiment,
we also present the experimental results in the form of a comparison between the performance of PVF under $\sigma$ (defined as in \Cref{subsubsec:perturbation-mechanism}) tuning and that of the traditional method,
respectively for the cervical cancer \cite{wolberg1993breast} and breast cancer \cite{fernandes2017cervical} datasets.
We refer the readers to \Cref{sec:real-setup} for the detailed setups of this part of experiments,
including exactly how we calculate the performance gaps between PVF and the traditional method with repetitive experiments.

\paragraph{Results on cervical cancer data}
For the cervical cancer dataset, 
PVF achieves its largest performance gap with the traditional method with a small perturbation for IE ($\sigma = 0.01$) and a near-zero perturbation for F1 Score ($\sigma = 10^{-6}$), 
as shown in \Cref{fig:cervical-best}. 
At the most stringent intervention capacity ($\gamma = 0.1$), 
PVF selects an externally better model in 60.0\% of times, 
compared to 26.7\% for the traditional method and 13.3\% ties. 
As $\gamma$ increases to 0.3–0.9, 
PVF continues to win more frequently, 
though the margin narrows: 
43.3–46.7\% times PVF wins versus 33.3–36.7\% the traditional method does,
with around 20\% ties across these settings. 
For F1, PVF still leads with 50.0\% wins, 
compared to 33.3\% the traditional method does and 16.7\% ties. 
These results indicate that once $\sigma$ is tuned, 
PVF consistently yields more reliable model selection than the traditional method, 
with its strongest advantage at $\gamma = 0.1$, 
where false positives are extremely costly.

\paragraph{Results on breast cancer data}
For the breast cancer dataset, 
the perturbation scale that works best is much larger ($\sigma = 0.2$–$0.3$), 
as shown in \Cref{fig:breast-best}.
At $\gamma = 0.1$,
PVF wins in 52.0\% of folds while the traditional method accounts for only 20.0\%, 
with 28.0\% ties. 
At $\gamma = 0.3$, 
the pattern is similar: 
48.0\% PVF, 20.0\% traditional, 
and 32.0\% ties. 
At higher intervention fractions ($\gamma = 0.5, 0.7, 0.9$) with $\sigma = 0.2$, 
results are more balanced, 
with PVF and the traditional method each winning 40.0\% of folds and 20.0\% ties. 
For F1 Score, 
the best $\sigma$ is 0.2, 
and PVF wins in 52.0\% of folds compared to 16.0\% naive and 32.0\% ties.

\paragraph{Real dataset result summary}
The perturbation level required for PVF to be most effective differs across datasets and evaluation metrics, 
but once tuned, PVF consistently outperforms or matches the traditional approach, 
with the largest benefits again observed at lower $\gamma$'s.

\subsubsection{Brief Conclusion of Experiments}
Across synthetic and real-data studies, 
PVF generally more often selects better models than the traditional single-split procedure. 
On synthetic datasets, 
PVF consistently improves selection under the traditional metrics (F1/accuracy) and IE, 
with the largest gains at smaller intervention capacities ($\gamma=0.1,0.3$) and F1 score/accuracy. 
On real datasets, 
PVF retains a clear advantage once the perturbation level is tuned, 
again with the strongest improvements at smaller capacities and F1 score.
\Cref{sec:sensitivity-analysis} further shows that $\sigma$ is a key hyperparameter,
and there is no universal best scale for different real datasets. 

\begin{figure}[tbp]
    \centering
    \includegraphics[width=0.47\textwidth]{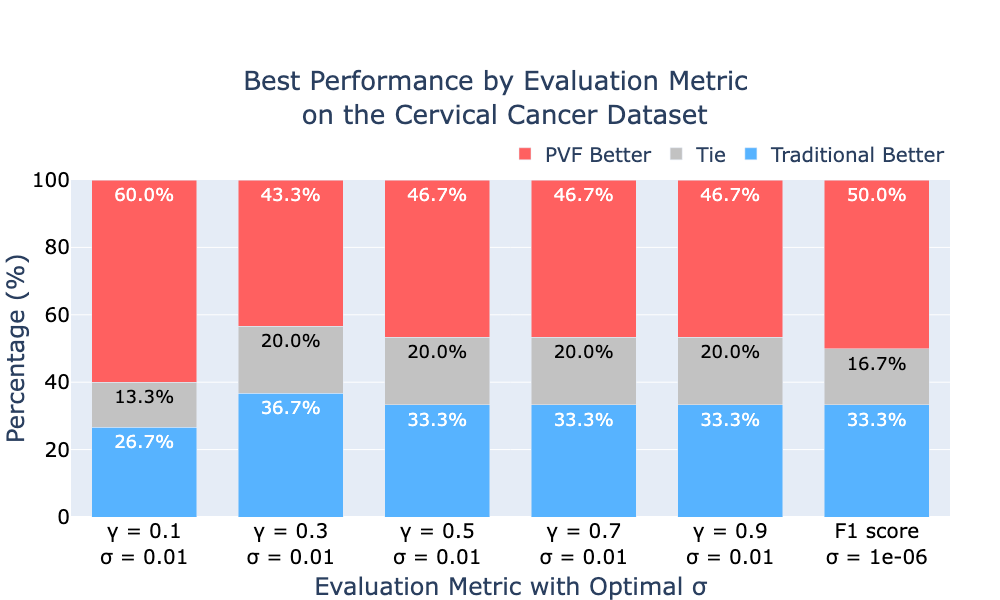}
    \caption{Largest performance gaps of PVF on the cervical cancer dataset under different evaluation metrics, when $\sigma$ is tuned. PVF consistently selects better external models than the traditional method across all $\gamma$ values, with its strongest advantage at $\gamma = 0.1$.}
    \label{fig:cervical-best}
\end{figure}

\begin{figure}[tbp]
    \centering
    \includegraphics[width=0.47\textwidth]{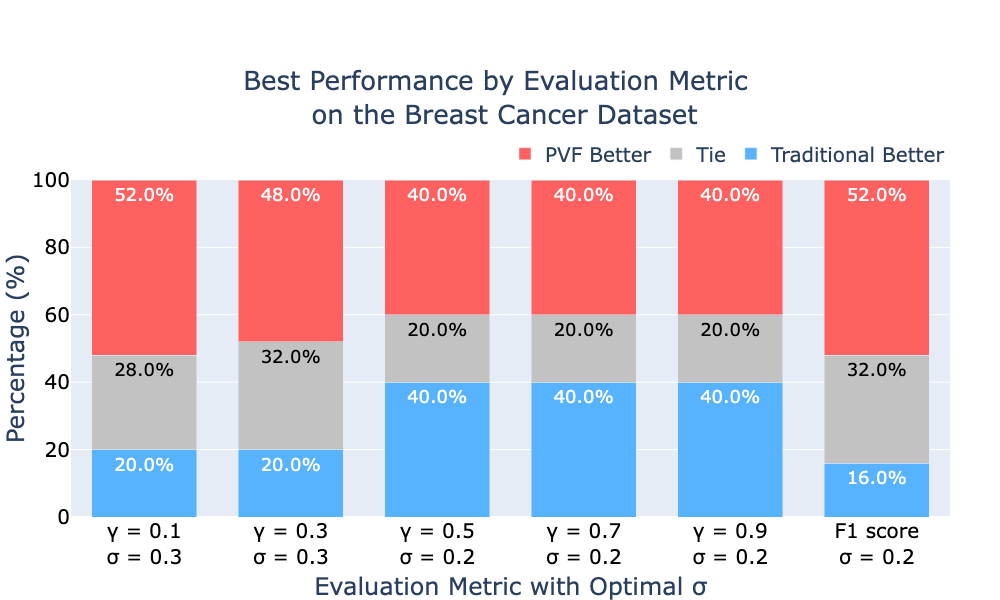}
    \caption{Largest performance gaps of PVF on the breast cancer dataset under different evaluation metrics, when $\sigma$ is tuned. PVF tends to outperform the traditional method across lower $\gamma$ values, specifically when $\gamma \in \{0.1, 0.3\}$, while tying with the traditional method at higher $\gamma$ values.}
    \label{fig:breast-best}
\end{figure}

\section{Conclusion}~\label{sec:conclusion}

This work aim to address a persistent challenge in clinical machine learning: selecting a single, interpretable model when data are limited, class distributions are imbalanced, and measurements are noisy, potentially aggravating the Rashomon Effect. We propose \emph{Intervention Efficiency} (IE), a capacity-aware evaluation metric that quantifies the expected benefit of a model when only a fixed fraction of individuals can receive an intervention, and \emph{Perturbation Validation Framework} (PVF) that prioritizes candidate models whose performance remains strong under perturbations of the validation set. Together, IE and PVF facilitate model selection under operational constraints and robustness requirements that are typical of clinical deployment.

Empirical studies spanning an exhaustive synthetic enumeration and two clinical datasets indicate that the PVF more often identifies models that generalize better out of sample than a conventional single-split selection does. The gains are most pronounced in the regimes with a low intervention capacity and when F1 score is the metric of interest.

As for limitations and future work, the selection and calibration of perturbation noise for PVF are consequential and may require domain-expert input to ensure that perturbations faithfully reflect clinical measurement variability and data quality. Otherwise, PVF may fail to outperform the traditional validation approach. Therefore, developing principled elicitation methods (e.g., protocol- or instrument-informed priors) and practical defaults for common data types would be crucial to improve applicability. On the theory side, there is substantial scope for formal analysis of PVF, including consistency guarantees and sensitivity to different performance metrics, especially the newly proposed IE. Such results would clarify applicable scenarios defined by data scale, noise, and resource limits. For IE, extensions to multiclass outcomes, integration of explicit cost, and consideration of fairness constraints remain important directions. Overall, coupling PVF with IE offers a practical, conceptual pathway and a future direction for navigating the Rashomon Effect in clinical machine learning, helping translate model multiplicity from a source of uncertainty into a framework for robust and resource-conscious model selection.

\section{Acknowledgments}

This work was supported by the Data+X (2024) project at DKU. We would also like to thank Prof. Dongmian Zou for valuable discussions on the theoretical aspects of the Intervention Efficiency and the Perturbation Validation Framework, and Prof. Chenkai Wu for his insightful suggestions on the Intervention Efficiency that helped us connect the proposed metric to practical interpretations.

\bibliography{aaai2026}

% =============================================
% Appendix starts here
% =============================================
\appendix

\section{Proof of \Cref{prop:IE-closed-form}}
\label{proof of prop:IE-closed-form}
\begin{proof}
We present the derivation in five steps including the analysis of the model's capability, the two operating regimes, the expression unification for the two regimes, and the transition from counting-based expression to the final ratio-based one. All notations and terms follow the same definitions in \Cref{prop:IE-closed-form}. For clarity, we additionally denote \(\npos\) as the total number of positives in the population, \(\pop\) as the population size, and \(\budget\) as the number of individuals that can be intervened. Specifically,
\[
\pi = \frac{\npos}{\pop}, 
\quad 
\gamma = \frac{\budget}{\pop}.
\]

\begin{enumerate}[label=\textbf{(\roman*)}, leftmargin=*, itemsep=2pt]
  \item \textbf{Model capability.}
        We first calculate how many individuals the model flags positive to attain the number of true positives. In the following analysis, \(TP\), \(FP\), \(TN\), \(FN\) denote true positives, false positives, true negatives and false negatives in the confusion matrix.
        Since \(\recall = TP/\npos\), the model can capture at most \(TP=\npos\,\recall\) positives.
        By the definition of precision,
        \[
          \precision = \frac{TP}{TP+FP}
          \quad\Longrightarrow\quad
          TP+FP=\frac{TP}{\precision}=\frac{\npos\,\recall}{\precision}.
        \]
        Therefore, the model labels \(\tfrac{\npos\,\recall}{\precision}\) individuals as positive and captures \(\npos\,\recall\) true positives in the whole population. Note that we may not be able to intervene on all these \(\tfrac{\npos\,\recall}{\precision}\) cases due to the fixed budget \(\budget\), and this creates the two following regimes.

  \item \textbf{Regime A (scarce resources).}
        If \(\tfrac{\npos\,\recall}{\precision}>\budget\), only \(\budget\) individuals can be intervened, which is insufficient for the model to realize its theoretical best in capturing all \(\npos\,\recall\) positives.
        Each model-guided intervention yields a true positive with probability \(\precision\), so the expected number of positives captured by the model in this regime, denoted as \(\TPA\), satisfies
        \[
          \TPA=\budget\,\precision.
        \]
        By comparison, random selection yields \(T_{\mathrm{random}} = \budget\,(\npos/\pop)\) true positives in expectation. Hence, by definition, the IE value in this regime satisfies
        \[
          \IEA
          =
          \frac{\TPA}{T_{\mathrm{random}}}
          =
          \frac{\budget\,\precision}{\;\budget\,(\npos/\pop)\;}
          =
          \frac{\pop\,\precision}{\npos}.
        \]

  \item \textbf{Regime B (ample resources).}
        If \(\tfrac{\npos\,\recall}{\precision}\le\budget\), the budget suffices to treat all model-flagged positives. 
        We intervene through a two-stage procedure.
        In stage 1,
        intervening on those \(\tfrac{\npos\,\recall}{\precision}\) cases predicted positive by the model captures \(\TPBOne=\npos\,\recall\) positives.
        The remaining budget is \(\rembud=\budget-\tfrac{\npos\,\recall}{\precision}\).
        The remaining pool has
        \[
          \rempop=\pop-\tfrac{\npos\,\recall}{\precision},\quad
          \rempos=\npos-\npos\,\recall,\quad
          \remprev=\frac{\rempos}{\rempop},
        \]
        where \(\rempop\) denotes the size of the residual population after removing the \(\tfrac{\npos\recall}{\precision}\) intervened, model-flagged individuals; \(\rempos\) is the number of true positives remaining in that residual population; and \(\remprev\) is the corresponding prevalence rate within this residual pool.
        We assume that the predictive model is uninformative on this residual pool. Therefore, in stage 2, the remaining \(\rembud\) interventions are allocated uniformly at random, yielding
        \[
          \TPBTwo=\rembud\,\remprev
        \]
        additional expected true positives.
        Consequently, the total number of positives captured by the model in regime B, denoted as \(\TPB\), satisfies
        \[
          \TPB
          =
          \TPBOne+\TPBTwo
          =
          \npos\,\recall
          +
          \Bigl(\budget - \tfrac{\npos\,\recall}{\precision}\Bigr)\,
          \dfrac{\npos - \npos\,\recall}{\;\pop - \tfrac{\npos\,\recall}{\precision}\;},
        \]
        and the IE value in this regime satisfies
        \[
          \IEB
          =
          \dfrac{
            \npos\,\recall
            +
            \Bigl(\budget - \tfrac{\npos\,\recall}{\precision}\Bigr)\,
            \dfrac{\npos - \npos\,\recall}{\;\pop - \tfrac{\npos\,\recall}{\precision}\;}
          }{
            \budget\,\dfrac{\npos}{\pop}
          }.
        \]

  \item \textbf{Unified expression.}
        To avoid case-by-case handling, set
        \(
          c'=\min\!\bigl(\budget,\tfrac{\npos\,\recall}{\precision}\bigr).
        \)
        Then, the expected positives captured respectively by the model,
        denoted as \(T_{\mathrm{model}}\), 
        and by random selection, are
        \[
          T_{\mathrm{model}}
            = c'\,\precision
              + (\budget - c')\,
                \frac{\npos - c'\,\precision}{\pop - c'}, 
          \qquad
          T_{\mathrm{random}}
            = \budget\,\frac{\npos}{\pop}.
        \]
        Therefore, the counting-based expression of IE satisfies
        \[
        \mathrm{IE}_\gamma(f)
        =
        \frac{T_{\mathrm{model}}}{T_{\mathrm{random}}}
        =
        \frac{
            c'\,\precision
            \;+\;
            \bigl(\budget - c'\bigr)\,
            \dfrac{\npos - c'\,\precision}{\pop - c'}
          }{
            \budget\,\dfrac{\npos}{\pop}
          }.
        \]
        When \(\budget<\tfrac{\npos\,\recall}{\precision}\), this reduces to \(\IEA\); otherwise it reduces to \(\IEB\).

    \item \textbf{Ratio-based expression.}
        Recalling the expression of \(s\) and \(c'\), we can notice that 
        \[
          s = \min\!\Bigl(\gamma, \frac{\pi \recall}{\precision}\Bigr) 
          = \frac{1}{\pop}\cdot \min\!\Bigl(\budget, \frac{\npos \recall}{\precision}\Bigr) 
          = \frac{c'}{\pop}.
        \]
        To leverage this relationship and get rid of \(c'\), \(\npos\) and \(\pop\) in the formula of IE, we divide both the numerator and denominator of the current form with \(\pop\) to obtain the ratio-based expression:
        \begin{align*}
        \mathrm{IE}_\gamma(f) 
        &= \frac{\dfrac{T_{\mathrm{model}}}{\pop}}{\dfrac{T_{\mathrm{random}}}{\pop}} \\[8pt]
        &= \frac{
            \dfrac{c'\,\precision}{\pop}
            +
            \left(\dfrac{\budget}{\pop} - \dfrac{c'}{\pop}\right)\,
            \dfrac{\dfrac{\npos}{\pop} - \dfrac{c'\,\precision}{\pop}}{1 - \dfrac{c'}{\pop}}
          }{
            \dfrac{\budget}{\pop} \cdot \dfrac{\npos}{\pop}
          } \\[10pt]
        &= \frac{
            s \precision + (\gamma - s)\,\dfrac{\pi - s \precision}{1 - s}
          }{\gamma \pi},
        \end{align*}
        whence the stated closed form for \(\mathrm{IE}_\gamma(f)\).
\end{enumerate}
\end{proof}

\section{Theoretical Guarantees of PVF}
\label{theoretical guarantees of pvf}

In this section we provide theoretical guarantees for PVF.  
We formalize how the PVF score behaves as the number of perturbed validation sets, the number of perturbation replicas, and the validation set size grow, and we show that PVF asymptotically selects models that are optimal for a property determined by the perturbation mechanism and the aggregation rule.

\subsection{Setup}

We use the notation introduced in \Cref{sec:pvf}.  
The original validation set is
\[
\Dval = \{(x_i,y_i)\}_{i=1}^{\ntest},
\]
where \(\ntest\) is the number of validation points, each \(x_i\) is a feature vector, and \(y_i\) is its label.

PVF uses a fixed perturbation mechanism that, given a feature vector \(x\), produces a random perturbed feature vector \(X'\).  
We denote this conditional distribution by
\[
X' \mid X = x \sim K(\cdot \mid x).
\]
In the implementation, \(K\) is specified by Gaussian noise for numeric features, label-flip noise for categorical features, and decay rules for ordinal features; for the theory, we only need that \(K\) is fixed and the perturbations are independent across samples and replicas.

From the original data distribution \(\p\) on \((X,Y)\) and the perturbation rule \(K\), we define the induced perturbed distribution \(\tilde{\p}\) on \((X',Y)\) by
\[
(X,Y) \sim \p, \qquad X' \mid X \sim K(\cdot \mid X), \qquad (X',Y) \sim \tilde{\p}.
\]

For each perturbation index \(m = 1,\dots,\npert\), PVF constructs a perturbed validation set
\[
\Dtstm
=
\bigl\{ (x_i^{(m,j)}, y_i)\bigr\}_{i=1,\dots,\ntest;\, j=1,\dots,\nloop},
\]
where, for each fixed \(i\), the replicas \(\{x_i^{(m,j)}\}_{j=1}^{\nloop}\) are i.i.d.\ drawn from \(K(\cdot \mid x_i)\), and the labels \(y_i\) are unchanged.

For a model \(f\) in a finite candidate set \(F\), the evaluation metric on each perturbed set is
\[
P_f^{(m)} = \mathrm{Eval}(f, \Dtstm),
\]
and the vector of perturbed scores is
\[
P_f = \bigl(P_f^{(1)},\dots,P_f^{(\npert)}\bigr) \in \mathbb{R}^{\npert}.
\]
The PVF score of model \(f\) is obtained by an aggregation rule
\[
A_f = A(P_f),
\]
where \(A\colon\mathbb{R}^{\npert}\to\mathbb{R}\) is chosen by the user (for example, \(A\) can be the mean or a lower quantile).  
PVF then selects
\[
\fstar \in \arg\max_{f\in F} A_f.
\]

We make the following assumptions, which will be referenced explicitly in the propositions and proofs.

\begin{enumerate}[label=(A\arabic*)]
    \item \(\Dval = \{(x_i,y_i)\}_{i=1}^{\ntest}\) is drawn i.i.d.\ from a distribution \(\p\) on \((X,Y)\).
    \item For each feature vector \(x\), the perturbation rule is given by a fixed conditional distribution \(K(\cdot \mid x)\), and all perturbations across different samples, replicas, and perturbed sets are independent.
    \item The set of candidate models \(F\) is a finite set.
    \item For any fixed model \(f\) and any dataset \(D\), the evaluation metric \(\mathrm{Eval}(f,D)\) is a bounded function of empirical statistics of \(D\).  
    In particular, for the metrics used in the paper (IE, F1, accuracy), \(\mathrm{Eval}(f,D)\) depends on \(D\) only through the empirical confusion matrix entries, which are empirical averages of bounded per-sample indicators.
    Generally, \(\mathrm{Eval}(f,D)\) should be a metric that calculates a generalized mean of the performance on each data point.
    \item There exists a functional \(\A\) on probability distributions on \(\mathbb{R}\) such that, if \(Z_1,\dots,Z_{\npert}\) are i.i.d.\ with common distribution \(L\), then
    \[
    A(Z_1,\dots,Z_{\npert}) \xrightarrow[\;\npert\to\infty\;]{\text{a.s.}} \A(L).
    \]
    This holds for standard aggregation rules including the sample mean, sample quantiles, the median, and more general continuous \(L\)-statistics.
\end{enumerate}

We now state and prove the theoretical guarantees under these assumptions.

\subsection{Convergence of PVF Scores}

The first step is to understand the behavior of the PVF score \(A_f\) for a fixed model \(f\) as the sampling dimensions \(\npert\), \(\nloop\), and \(\ntest\) grow.

For each fixed \(\ntest,\nloop\) and realization of \(\Dval\), let
\(
\mathcal{L}_{\ntest,\nloop}(f)
\)
denote the conditional distribution of \(P_f^{(1)} = \mathrm{Eval}(f,\Dtst^{(1)})\) given \(\Dval\).  
Similarly, for each \(\ntest\), we will use
\(
\mathcal{L}_{\ntest,\infty}(f)
\)
for the limiting distribution of \(P_f^{(1)}\) as \(\nloop\to\infty\).

\begin{proposition}[i.i.d.\ structure of perturbed scores]
\label{prop:iid}
Suppose (A1) and (A2) hold, and fix \(\Dval\), \(\ntest\), \(\nloop\), and a model \(f\in F\).  
Then, conditional on \(\Dval\), the scores \(P_f^{(1)},\dots,P_f^{(\npert)}\) are i.i.d.\ random variables with common distribution \(\mathcal{L}_{\ntest,\nloop}(f)\).
\end{proposition}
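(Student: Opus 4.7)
The plan is to reduce this proposition to a direct structural consequence of the assumptions (A1) and (A2), without needing any probabilistic limit theorems, since the claim is entirely about the conditional law at fixed $\ntest,\nloop,\npert$. First I would condition on $\Dval = \{(x_i,y_i)\}_{i=1}^{\ntest}$. Under this conditioning, the labels $y_i$ are deterministic and the only randomness left in any $\Dtstm$ comes from the perturbed features $x_i^{(m,j)}$ drawn from $K(\cdot\mid x_i)$.

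Next, I would invoke (A2) to describe the joint law of the perturbed replicas across $m$ and $j$. By (A2), for each fixed $i$, the replicas $\{x_i^{(m,j)}\}_{j=1}^{\nloop}$ are i.i.d.\ from $K(\cdot\mid x_i)$, and the families $\bigl\{x_i^{(m,j)}\bigr\}_{i,j}$ are mutually independent across different perturbation indices $m$. Consequently, for each $m$, the conditional distribution of $\Dtstm$ given $\Dval$ is the product measure
\[
\bigotimes_{i=1}^{\ntest} K(\cdot\mid x_i)^{\otimes \nloop},
\]
with the labels attached deterministically. Crucially, this measure does not depend on $m$, and the $\npert$ random datasets $\Dtst^{(1)},\dots,\Dtst^{(\npert)}$ are conditionally independent given $\Dval$ because their defining noise variables are drawn from independent copies of the same mechanism.

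Finally, I would apply the standard fact that measurable functions of independent (respectively i.i.d.) random elements are themselves independent (respectively i.i.d.). By (A4), $\mathrm{Eval}(f,\cdot)$ is a fixed measurable function of the dataset for each fixed $f$. Therefore
\[
P_f^{(m)} = \mathrm{Eval}(f,\Dtstm), \qquad m=1,\dots,\npert,
\]
are conditionally i.i.d.\ given $\Dval$, and their common conditional law is by definition $\mathcal{L}_{\ntest,\nloop}(f)$, the pushforward of $\bigotimes_i K(\cdot\mid x_i)^{\otimes \nloop}$ under $D\mapsto\mathrm{Eval}(f,D)$.

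There is no real obstacle here: the proposition is a bookkeeping statement rather than a genuine theorem. The only mild subtlety is to state clearly that independence is preserved under the deterministic map $D\mapsto\mathrm{Eval}(f,D)$, and to make the definition of $\mathcal{L}_{\ntest,\nloop}(f)$ precise as the pushforward measure described above, so that the subsequent propositions (which let $\npert\to\infty$ or $\nloop\to\infty$) can invoke classical limit theorems for i.i.d.\ sequences without further justification.
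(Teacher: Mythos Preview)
Your proposal is correct and follows essentially the same route as the paper's proof: both argue that conditional on $\Dval$, the perturbed datasets $\Dtst^{(1)},\dots,\Dtst^{(\npert)}$ are i.i.d.\ by (A2), and then use (A4) to push this i.i.d.\ structure through the deterministic map $\mathrm{Eval}(f,\cdot)$. Your version is slightly more explicit in writing out the product measure $\bigotimes_i K(\cdot\mid x_i)^{\otimes\nloop}$ and defining $\mathcal{L}_{\ntest,\nloop}(f)$ as its pushforward, while the paper spells out the preimage argument for why independence is preserved, but the substance is identical.
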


\begin{proof}
Assumption (A2) states that, for each \(i\), all perturbations \(\{x_i^{(m,j)}\}\) across different \(m\) and \(j\) are independent given \(\Dval\) and follow the same distribution \(K(\cdot \mid x_i)\).  
By construction, each \(\Dtstm\) is built from independent perturbations of \(\Dval\), and thus the sets \(\Dtst^{(1)},\dots,\Dtst^{(\npert)}\) are conditionally independent and identically distributed given \(\Dval\).  
Assumption (A4) ensures that $\mathrm{Eval}(f,\cdot)$ is a deterministic and
measurable function of its input dataset.  Because determinism implies that
$\mathrm{Eval}(f,\cdot)$ introduces no additional randomness beyond that already
present in the dataset, applying $\mathrm{Eval}(f,\cdot)$ to each perturbed
dataset $\Dtst^{(m)}$ will not create dependence between the resulting scores.

Formally, for any real intervals $I_1,\dots, I_k$ and any indices
$m_1,\dots,m_k$, the events $\{P_f^{(m_i)} \in I_i\}$ satisfy
\[
\{P_f^{(m_i)} \in I_i\}
=
\{\Dtst^{(m_i)} \in \mathrm{Eval}(f,\cdot)^{-1}(I_i)\},
\]
where $\mathrm{Eval}(f,\cdot)^{-1}(I_i)$ is the preimage of $I_i$ under the
evaluation map.  Since the perturbed datasets
$\Dtst^{(1)},\dots,\Dtst^{(\npert)}$ are independent conditional on $\Dval$
(by construction under Assumption~(A2)), we have
\[
\Pr\!\left(
P_f^{(m_1)} \in I_1,\dots,P_f^{(m_k)} \in I_k \mid \Dval
\right)
=
\]
\[
\prod_{i=1}^k 
\Pr\!\left(
P_f^{(m_i)} \in I_i \mid \Dval
\right).
\]
This shows that evaluating each $\Dtst^{(m)}$ through the deterministic map
$\mathrm{Eval}(f,\cdot)$ preserves the i.i.d.\ structure established at the
level of the perturbed datasets.
 
Therefore, the resulting scores \(P_f^{(1)},\dots,P_f^{(\npert)}\) are conditionally i.i.d.\ given \(\Dval\) with some common distribution, which we denote by \(\mathcal{L}_{\ntest,\nloop}(f)\).
\end{proof}

Interpretation: for a fixed validation set and perturbation parameters, PVF is aggregating many i.i.d.\ draws of a random performance score for each model under the chosen perturbation rule.

\begin{proposition}[Convergence in \(\npert\)]
\label{prop:npert}
Suppose (A1), (A2), (A4), and (A5) hold, and fix \(\Dval\), \(\ntest\), \(\nloop\), and \(f\in F\).  
Then
\[
A_f = A\bigl(P_f^{(1)},\dots,P_f^{(\npert)}\bigr)
\xrightarrow[\;\npert\to\infty\;]{\text{a.s.}}
\A\bigl(\mathcal{L}_{\ntest,\nloop}(f)\bigr).
\]
\end{proposition}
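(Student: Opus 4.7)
The plan is to reduce the claim directly to \Cref{prop:iid} combined with Assumption~(A5). Since the validation set $\Dval$, the sizes $\ntest$ and $\nloop$, and the model $f$ are all held fixed by hypothesis, the only source of randomness in $P_f^{(1)},\dots,P_f^{(\npert)}$ is the perturbation mechanism $K$. Conditioning on $\Dval$ therefore turns the statement into one about an i.i.d.\ sequence of scalar random variables on $\mathbb{R}$, which is precisely the regime that (A5) is designed to handle.

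First I would apply \Cref{prop:iid}, which states that conditional on $\Dval$ the scores $P_f^{(1)},\dots,P_f^{(\npert)}$ are i.i.d.\ with a common distribution
\[
L \;:=\; \mathcal{L}_{\ntest,\nloop}(f).
\]
Assumption~(A4) guarantees that $\mathrm{Eval}(f,\cdot)$ is bounded, so $L$ is a compactly supported probability measure on $\mathbb{R}$; this is the natural setting in which the standard aggregators (sample mean, sample quantiles, $L$-statistics) admit well-defined limit functionals. Next I would invoke (A5) with the i.i.d.\ sequence $(P_f^{(m)})_{m\ge 1}$ and common law $L$, which yields
\[
A_f \;=\; A\bigl(P_f^{(1)},\dots,P_f^{(\npert)}\bigr)
\xrightarrow[\;\npert\to\infty\;]{\text{a.s.}}
\A(L)
\;=\;
\A\bigl(\mathcal{L}_{\ntest,\nloop}(f)\bigr),
\]
where the almost sure convergence is with respect to the perturbation randomness for the fixed $\Dval$. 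No further averaging is required, and this is exactly the stated conclusion.

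The main subtlety, rather than a genuine obstacle, is verifying that the abstract guarantee in (A5) is instantiated correctly for the chosen aggregator $A$: for the sample mean this reduces to Kolmogorov's strong law of large numbers; for a sample quantile it follows from the Glivenko--Cantelli theorem combined with continuity of $L$ at the target quantile; for continuous $L$-statistics it is classical. The only delicate case arises when $L$ has an atom at a target quantile (for instance, when $\mathrm{Eval}$ is discrete-valued on a fixed $\ntest$-sample dataset), which would require either tie-breaking in the definition of $A$ or the use of an averaged quantile. Since (A5) is imposed at the level of the aggregation rule itself, any such regularity is absorbed into the user's choice of $A$, and no additional argument is needed beyond chaining \Cref{prop:iid} with (A5).
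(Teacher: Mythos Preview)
Your proposal is correct and follows essentially the same approach as the paper: invoke \Cref{prop:iid} to obtain that, conditional on $\Dval$, the scores $P_f^{(1)},\dots,P_f^{(\npert)}$ are i.i.d.\ with law $\mathcal{L}_{\ntest,\nloop}(f)$, and then apply Assumption~(A5) directly to this sequence. The additional remarks you make about boundedness from (A4) and about how specific aggregators instantiate (A5) are helpful elaborations but not required for the argument, since (A5) already packages those guarantees.
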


\begin{proof}
By Proposition~\ref{prop:iid}, conditional on \(\Dval\), the scores \(P_f^{(1)},\dots,P_f^{(\npert)}\) are i.i.d.\ with common distribution \(\mathcal{L}_{\ntest,\nloop}(f)\).  
Assumption (A5) states that for any i.i.d.\ sequence from a distribution \(L\), the aggregation rule satisfies
\[
A(Z_1,\dots,Z_{\npert}) \to \A(L) \quad \text{almost surely as } \npert\to\infty.
\]
Applying this with \(Z_m = P_f^{(m)}\) and \(L = \mathcal{L}_{\ntest,\nloop}(f)\) yields the claimed convergence.
\end{proof}

Interpretation: when the validation set and perturbation settings are fixed, increasing the number of perturbed validation sets guarantees that PVF approximates the population-level value of the aggregation rule applied to the distribution of perturbed scores for each model.

\begin{proposition}[Convergence in \(\nloop\)]
\label{prop:nloop}
Suppose (A1), (A2), and (A4) hold, and fix \(\Dval\), \(\ntest\), and \(f\in F\).  
Then, there exists a distribution \(\mathcal{L}_{\ntest,\infty}(f)\) such that
\[
\mathcal{L}_{\ntest,\nloop}(f) \xrightarrow[\;\nloop\to\infty\;]{} \mathcal{L}_{\ntest,\infty}(f),
\]
and, if (A5) also holds,
\[
\A\bigl(\mathcal{L}_{\ntest,\nloop}(f)\bigr)
\xrightarrow[\;\nloop\to\infty\;]{} 
\A\bigl(\mathcal{L}_{\ntest,\infty}(f)\bigr).
\]
\end{proposition}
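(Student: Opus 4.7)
The plan is to show that as \(\nloop \to \infty\), the evaluation score \(P_f^{(1)} = \mathrm{Eval}(f, \Dtst^{(1)})\) on a single perturbed validation set concentrates almost surely on a deterministic constant \(J(f,\Dval)\) depending only on \(f\), \(\Dval\), and the perturbation kernel \(K\). Consequently, the limiting distribution \(\mathcal{L}_{\ntest,\infty}(f)\) is the Dirac mass \(\delta_{J(f,\Dval)}\), and the second claim reduces to showing that standard aggregation functionals behave continuously at a point mass.

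First I would invoke Assumption~(A4) to write \(P_f^{(1)} = h(W_\nloop)\), where \(W_\nloop \in \mathbb{R}^r\) collects the empirical confusion-matrix entries of \(\Dtst^{(1)}\) and \(h\) is a bounded function that is continuous at the relevant limit (for IE, F1, and accuracy this holds whenever the denominators appearing in \(h\) are positive at the limit). The crucial observation is that each coordinate of \(W_\nloop\) is an average over all \(\ntest\nloop\) perturbed pairs, which factorizes as the outer average over \(i = 1,\dots,\ntest\) of the within-sample averages
\[
W_{\nloop,i} = \frac{1}{\nloop}\sum_{j=1}^{\nloop} g\bigl(f(x_i^{(1,j)}),\, y_i\bigr),
\]
where \(g\) is the bounded indicator defining that entry. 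By (A2), for each fixed \(i\) the replicas \(\{x_i^{(1,j)}\}_{j\ge 1}\) are i.i.d.\ from \(K(\cdot\mid x_i)\) while \(y_i\) and \(f\) are fixed, so the strong law of large numbers gives \(W_{\nloop,i} \to \mu_i(f) := \mathbb{E}_{X'\sim K(\cdot\mid x_i)}[g(f(X'),y_i)]\) almost surely. Averaging over the finite index \(i\) yields \(W_\nloop \to \bar{\mu}(f,\Dval) := \tfrac{1}{\ntest}\sum_i \mu_i(f)\) almost surely, and the continuous mapping theorem applied through \(h\) produces \(P_f^{(1)} \to J(f,\Dval) := h(\bar{\mu}(f,\Dval))\) almost surely. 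This almost-sure convergence implies the weak convergence \(\mathcal{L}_{\ntest,\nloop}(f) \to \delta_{J(f,\Dval)}\), establishing the first assertion.

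For the second assertion, with \(c := J(f,\Dval)\), I would argue that for every aggregation rule covered by (A5)---the mean (using boundedness from (A4) to ensure convergence of expectations), medians, quantiles, and more general continuous \(L\)-statistics---we have \(\A(\mathcal{L}_{\ntest,\nloop}(f)) \to \A(\delta_c) = c\), because each of these functionals is continuous under weak convergence at a point mass. The main obstacle, and the step to scrutinize most carefully, is the continuity structure relied upon in two places. First, Assumption~(A4) merely states that \(h\) is a bounded function of empirical statistics; for ratio-based metrics such as F1 and IE, continuity at the limit requires the denominators at \(\bar{\mu}(f,\Dval)\) to remain bounded away from zero, which should either be added as a regularity condition on \((f,\Dval,K)\) or handled by a degenerate-case argument. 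Second, Assumption~(A5) as stated only guarantees pointwise convergence \(A(Z_1,\dots,Z_\npert) \to \A(L)\) for i.i.d.\ samples from a fixed \(L\); passing from weak convergence \(\mathcal{L}_{\ntest,\nloop}(f) \to \delta_c\) to convergence of \(\A\) formally needs a continuity hypothesis on \(\A\) at point masses, which holds for the standard aggregation choices highlighted in the paper but deserves explicit statement for generality.
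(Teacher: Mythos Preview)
Your proof is correct and follows essentially the same approach as the paper: apply the strong law of large numbers to the i.i.d.\ replicas within each validation point, average over the finitely many points, and then pass through the evaluation function to obtain the limiting distribution. You are in fact more explicit than the paper---you identify \(\mathcal{L}_{\ntest,\infty}(f)\) as the point mass \(\delta_{J(f,\Dval)}\) and correctly flag two regularity issues (continuity of the metric \(h\) at the limit and continuity of \(\A\) at point masses) that the paper's proof glosses over by invoking (A4) and (A5) without further comment.
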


\begin{proof}
Fix \(\Dval\), \(\ntest\), and \(f\).  
By Assumption (A2), for each original sample \((x_i,y_i)\), the replicas \(\{x_i^{(1,j)}\}_{j=1}^{\nloop}\) are i.i.d.\ from \(K(\cdot\mid x_i)\).  
Assumption (A4) states that \(\mathrm{Eval}(f,\Dtst^{(1)})\) is a bounded function of empirical statistics of \(\Dtst^{(1)}\).  
By the law of large numbers, for the replicas, as \(\nloop\to\infty\), these empirical statistics of \(\Dtst^{(1)}\) converge almost surely to their conditional expectations given \(\Dval\).  
Since \(\mathrm{Eval}(f,\cdot)\) is a deterministic function of these statistics, the distribution of \(P_f^{(1)} = \mathrm{Eval}(f,\Dtst^{(1)})\) also converges to a limiting distribution, which we denote by \(\mathcal{L}_{\ntest,\infty}(f)\).  
Assumption (A5) then implies the convergence of \(\A(\mathcal{L}_{\ntest,\nloop}(f))\) to \(\A(\mathcal{L}_{\ntest,\infty}(f))\).
\end{proof}

Interpretation: as the number of perturbation replicas per validation point grows, the randomness arising from finite perturbation sampling disappears, and the distribution of perturbed scores for each model stabilizes to a limit.

\begin{proposition}[Convergence in \(\ntest\)]
\label{prop:ntest}
Suppose (A1), (A2), and (A4) hold.  
For each model \(f\in F\), define
\[
J(f) := \mathrm{Eval}(f,\tilde{\p}),
\]
the performance value of \(f\) under the perturbed population distribution \(\tilde{\p}\).  
Then, as \(\ntest \to \infty\),
\[
\mathcal{L}_{\ntest,\infty}(f) \xrightarrow[\;\ntest\to\infty\;]{} \delta_{J(f)},
\]
where $\delta_{J(f)}$ is the probability distribution that assigns probability
\(1\) to the single point \(J(f)\); equivalently, a random variable \(Z\) has
distribution $\delta_{J(f)}$ if and only if
\[
\Pr(Z = J(f)) = 1.
\]
If (A5) also holds, then
\[
\A\bigl(\mathcal{L}_{\ntest,\infty}(f)\bigr)
\xrightarrow[\;\ntest\to\infty\;]{P}
\A\bigl(\delta_{J(f)}\bigr)
=: J_A(f).
\]
\end{proposition}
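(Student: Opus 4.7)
The plan is to reduce the statement, via \Cref{prop:nloop}, to the observation that $\mathcal{L}_{\ntest,\infty}(f)$ is a (random) point mass whose location is a deterministic function of $\Dval$, and then to apply the strong law of large numbers to the i.i.d.\ validation draws to show that this location converges to $J(f)$ in probability. The aggregation statement then follows because standard rules covered by (A5) satisfy $\A(\delta_c)=c$ for any scalar $c$.

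First I would carry out the inner-loop degeneracy step. By (A4), $\mathrm{Eval}(f,\Dtst^{(1)})$ depends on the perturbed set only through bounded empirical averages of the form $\tfrac{1}{\nloop\ntest}\sum_{i,j}\psi(f(x_i^{(1,j)}),y_i)$, where $\psi$ ranges over the bounded per-sample indicators that build the confusion matrix (TP, FP, TN, FN). By (A2) the replicas $\{x_i^{(1,j)}\}_j$ are i.i.d.\ from $K(\cdot\mid x_i)$, so the SLLN applied over $j$ (for each fixed $i$) yields almost-sure convergence of the inner average to $\phi_\psi(x_i,y_i):=E_{X'\sim K(\cdot\mid x_i)}[\psi(f(X'),y_i)]$. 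Passing through the deterministic post-processing map $H$ that turns confusion-matrix statistics into the metric value, this identifies $\mathcal{L}_{\ntest,\infty}(f)=\delta_{T_\ntest(f)}$, with
\[
T_\ntest(f)\;=\;H\!\Bigl(\tfrac{1}{\ntest}\sum_{i=1}^{\ntest}\phi_\psi(X_i,Y_i)\Bigr).
\]

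Next I would apply the SLLN a second time, now across the original i.i.d.\ samples $(X_i,Y_i)\sim\p$, yielding
\[
\tfrac{1}{\ntest}\sum_{i=1}^{\ntest}\phi_\psi(X_i,Y_i)\xrightarrow{\text{a.s.}} E_\p[\phi_\psi(X,Y)]=E_{\tilde\p}[\psi(f(X'),Y)],
\]
where the second equality is the tower property. Applying $H$ identifies this limit with $J(f)$, so $T_\ntest(f)\to J(f)$ almost surely, hence in probability; the random point masses $\delta_{T_\ntest(f)}$ therefore converge weakly (in probability) to $\delta_{J(f)}$. For the aggregation claim, (A5) gives $\A(\delta_{T_\ntest(f)})=T_\ntest(f)$, and the right-hand side converges in probability to $J(f)=\A(\delta_{J(f)})$.

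The routine part is the double SLLN. The main obstacle is justifying continuity of $H$ at the relevant limiting argument: for metrics such as F1 and IE, $H$ involves ratios whose denominators (e.g.\ $\pi$, $\precision$, or $\pi\recall/\precision$) can in principle vanish, so a mild regularity condition on $(f,\tilde\p)$ is required — for instance, that $\tilde\p$ has strictly positive base rates of both classes and $f$ is not almost-surely constant — so that $H$ is continuous in a neighborhood of the limit and the continuous-mapping step is valid.
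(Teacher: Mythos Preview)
Your proposal is correct and follows essentially the same approach as the paper: identify the metric as a function of empirical confusion-matrix statistics, apply the law of large numbers so that these statistics converge to their population values under $\tilde{\p}$, and conclude via continuous mapping. Your version is in fact more careful than the paper's own argument, which glosses over both the point-mass structure of $\mathcal{L}_{\ntest,\infty}(f)$ and the continuity requirement on $H$ that you correctly flag as needing a mild regularity condition.
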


\begin{proof}
Assumption (A1) implies that the empirical distribution of \(\Dval\) converges to \(\p\) as \(\ntest\to\infty\).  
Combined with the perturbation rule in (A2), this implies that the empirical distribution of perturbed samples in each \(\Dtst^{(1)}\) converges to \(\tilde{\p}\).  
By Assumption (A4), \(\mathrm{Eval}(f,\Dtst^{(1)})\) is a bounded function of empirical statistics of \(\Dtst^{(1)}\).  
These statistics converge to their expectations under \(\tilde{\p}\), and hence \(\mathrm{Eval}(f,\Dtst^{(1)})\) converges in probability to \(\mathrm{Eval}(f,\tilde{\p}) = J(f)\).  
Therefore, the distribution \(\mathcal{L}_{\ntest,\infty}(f)\) converges to \(\delta_{J(f)}\).  
Assumption (A5) then yields \(\A(\mathcal{L}_{\ntest,\infty}(f)) \to \A(\delta_{J(f)})\).
\end{proof}

Interpretation: with a sufficiently large validation set, all randomness in the perturbed score for a fixed model collapses to a single deterministic value \(J(f)\), and the limit of any reasonable aggregation of that score is just a deterministic function \(J_A(f)\) of the same quantity.

\begin{proposition}[Unified convergence of PVF scores]
\label{prop:score-convergence}
Suppose (A1)–(A5) hold.  
Then, for any fixed model \(f \in F\),
\[
A_f \xrightarrow[\;\npert,\nloop,\ntest\to\infty\;]{P} J_A(f),
\]
where \(J_A(f) = \A(\delta_{J(f)})\) and \(J(f) = \mathrm{Eval}(f,\tilde{\p})\).
\end{proposition}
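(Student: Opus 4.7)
The plan is to chain together Propositions \ref{prop:npert}, \ref{prop:nloop}, and \ref{prop:ntest} through a triangle-inequality decomposition and then argue that each piece vanishes in the appropriate limit. Specifically, I would write
\[
|A_f - J_A(f)| \le T_1 + T_2 + T_3,
\]
where
\begin{align*}
T_1 &= \bigl|A_f - \A\bigl(\mathcal{L}_{\ntest,\nloop}(f)\bigr)\bigr|, \\
T_2 &= \bigl|\A\bigl(\mathcal{L}_{\ntest,\nloop}(f)\bigr) - \A\bigl(\mathcal{L}_{\ntest,\infty}(f)\bigr)\bigr|, \\
T_3 &= \bigl|\A\bigl(\mathcal{L}_{\ntest,\infty}(f)\bigr) - J_A(f)\bigr|.
\end{align*}
The first step is to fix an arbitrary \(\varepsilon > 0\) and show that, for large enough \(\ntest, \nloop, \npert\), each \(T_i\) is smaller than \(\varepsilon/3\) with probability tending to one.

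The second step is to handle the three terms in the natural order dictated by the nesting of conditionals. First, I would condition on \(\Dval\) and the replica draws; then Proposition \ref{prop:npert} makes \(T_1 \to 0\) almost surely as \(\npert \to \infty\). Next, still conditioning on \(\Dval\), Proposition \ref{prop:nloop} gives \(T_2 \to 0\) as \(\nloop \to \infty\), provided the functional \(\A\) is continuous at \(\mathcal{L}_{\ntest,\infty}(f)\) with respect to the mode of convergence in (A5), which is standard for the aggregators listed (mean, quantiles, continuous \(L\)-statistics). Finally, Proposition \ref{prop:ntest} gives \(T_3 \to 0\) in probability over the draw of \(\Dval\) as \(\ntest \to \infty\).

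The third step is to convert this chained argument into a genuine joint statement. The cleanest route is to pick, for each \(\varepsilon > 0\), a sequence \(\ntest_j \to \infty\) along which \(\Pr(T_3 > \varepsilon/3) \to 0\); then for each fixed \(\ntest_j\), pick \(\nloop_j\) large enough that \(\Pr(T_2 > \varepsilon/3 \mid \Dval) < \varepsilon\) on an event of high probability; and finally pick \(\npert_j\) large enough that \(\Pr(T_1 > \varepsilon/3 \mid \Dval) < \varepsilon\). A standard union bound over the three events then delivers convergence in probability along the joint index. Equivalently, one can appeal to the subsequence characterization of convergence in probability, which lets us pass from an iterated limit to a joint limit without requiring any stronger mode of convergence.

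The main obstacle I anticipate is that the three convergence modes in the feeding propositions are heterogeneous: \(T_1\) vanishes almost surely conditional on \(\Dval\), \(T_2\) is a deterministic limit conditional on \(\Dval\), and \(T_3\) is only in probability over \(\Dval\). Making sure these combine cleanly requires being careful that the rates at which \(\nloop\) and \(\npert\) must grow are chosen after \(\ntest\) (so they can depend on the realization of \(\Dval\) only through its law), and that the continuity hypothesis implicit in (A5) is strong enough to pass limits under \(\A\) in this nested fashion. Once that ordering is fixed and the continuity of \(\A\) at the relevant degenerate limit \(\delta_{J(f)}\) is noted, the joint convergence \(A_f \to J_A(f)\) in probability follows, completing the unified statement.
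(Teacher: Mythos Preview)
Your proposal is correct and follows essentially the same approach as the paper: both chain Propositions~\ref{prop:npert}, \ref{prop:nloop}, and \ref{prop:ntest} (your $T_1, T_2, T_3$ decomposition is exactly the three steps the paper lists) and then combine the three convergences. The paper's proof is in fact terser than yours---it simply invokes the three propositions in order and asserts that ``combining these convergences\ldots yields the stated convergence,'' without your more careful discussion of heterogeneous convergence modes, the nesting of conditionals, or the iterated-versus-joint limit issue.
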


\begin{proof}
For fixed \(\Dval\), \(\ntest\), and \(\nloop\), Proposition~\ref{prop:npert} implies
\[
A_f \xrightarrow[\;\npert\to\infty\;]{\text{a.s.}} \A\bigl(\mathcal{L}_{\ntest,\nloop}(f)\bigr).
\]
Proposition~\ref{prop:nloop} implies
\[
\A\bigl(\mathcal{L}_{\ntest,\nloop}(f)\bigr)
\xrightarrow[\;\nloop\to\infty\;]{} \A\bigl(\mathcal{L}_{\ntest,\infty}(f)\bigr),
\]
and Proposition~\ref{prop:ntest} implies
\[
\A\bigl(\mathcal{L}_{\ntest,\infty}(f)\bigr)
\xrightarrow[\;\ntest\to\infty\;]{P}
\A\bigl(\delta_{J(f)}\bigr)
=: J_A(f).
\]
Combining these convergences (almost sure in \(\npert\), then in \(\nloop\), then in probability in \(\ntest\)) yields the stated convergence of \(A_f\) to \(J_A(f)\) in probability as all three variables grow.
\end{proof}

Interpretation: for each model, as we use more perturbed validation sets, more perturbation replicas, and more validation data, the PVF score converges to a deterministic limit that depends only on the underlying data distribution, the perturbation rule, the evaluation metric, and the aggregation rule.

\subsection{Consistency of PVF Model Selection}

We now show that, under the same assumptions, PVF consistently selects models that maximize the objective \(J_A(f)\).

Define
\[
J_A \colon F \to \mathbb{R},
\qquad
f \mapsto J_A(f),
\]
and the set of maximizers
\[
F_A^\star := \{ f \in F : J_A(f) = \max_{g\in F} J_A(g) \}.
\]

\begin{proposition}[Consistency of PVF selection]
\label{prop:selection}
Suppose (A1)–(A5) hold.  
Then the PVF-selected model \(\fstar\) satisfies
\[
\fstar \xrightarrow[\;\npert,\nloop,\ntest\to\infty\;]{P} f_\infty^\star,
\]
for some \(f_\infty^\star \in F_A^\star\).
\end{proposition}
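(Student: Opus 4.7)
The plan is to combine the pointwise convergence from Proposition~\ref{prop:score-convergence} with a standard argmax-stability argument over the finite model class $F$. First, I would fix an arbitrary $\epsilon > 0$ and apply Proposition~\ref{prop:score-convergence} separately to every $f \in F$ to conclude that $A_f \xrightarrow{P} J_A(f)$ as $\npert, \nloop, \ntest \to \infty$. Because $F$ is finite, a union bound over the $|F|$ events $\{|A_f - J_A(f)| \ge \epsilon\}$ upgrades this to the joint statement
\[
\Pr\!\left(\max_{f \in F} |A_f - J_A(f)| < \epsilon\right) \to 1,
\]
which is the uniform convergence needed to make argmax comparisons meaningful.

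Second, I would introduce the gap
\[
\Delta := \max_{f \in F} J_A(f) \;-\; \max_{f \in F \setminus F_A^\star} J_A(f),
\]
with the convention $\Delta := +\infty$ if $F_A^\star = F$. Since $F$ is finite and $F_A^\star$ is by definition the argmax set, $\Delta > 0$. Writing $J_A^\star := \max_f J_A(f)$ and specializing the uniform bound to $\epsilon = \Delta/2$ yields, with probability tending to $1$, that $A_f > J_A^\star - \Delta/2$ for every $f \in F_A^\star$ while $A_f < J_A^\star - \Delta/2$ for every $f \notin F_A^\star$. Hence, with probability tending to $1$, $\arg\max_{f \in F} A_f \subseteq F_A^\star$, and in particular the PVF-selected model $\fstar$ lies in $F_A^\star$.

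To upgrade this to convergence of $\fstar$ to a single element $f_\infty^\star$, I would fix a deterministic tie-breaking rule on $F$ (for example, a pre-specified total order) and use it both in the definition of $\fstar$ and in selecting $f_\infty^\star$ as the minimal element of $F_A^\star$ under that order. On the event $\arg\max_{f \in F} A_f \subseteq F_A^\star$, the two procedures pick the same element of $F_A^\star$, so $\Pr(\fstar = f_\infty^\star) \to 1$, which on the discrete finite space $F$ is precisely convergence in probability.

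The main obstacle, and the only point that requires care beyond routine argmax reasoning, is the possible non-uniqueness of the limiting maximizer: when several models share the optimal value $J_A^\star$, the raw selection $\arg\max_f A_f$ could keep oscillating among them as sample sizes grow. The phrasing \emph{for some} $f_\infty^\star \in F_A^\star$ already signals that one cannot hope for convergence to a canonical element without further structure; fixing a measurable tie-breaking rule (or alternatively weakening the conclusion to $\Pr(\fstar \in F_A^\star) \to 1$) is what makes the claim precise. Beyond that, the argument is a clean composition of Proposition~\ref{prop:score-convergence}, a finite union bound, and the strict positivity of the argmax gap guaranteed by the finiteness of $F$.
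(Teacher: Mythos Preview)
Your approach mirrors the paper's proof closely: invoke Proposition~\ref{prop:score-convergence} for each $f$, pass to uniform convergence over the finite class via a union bound (the paper phrases this as the score vector converging in probability, which is equivalent), introduce a strictly positive gap $\Delta$ between optimal and suboptimal limit values, and conclude that the empirical argmax eventually lies in $F_A^\star$. The paper uses $\varepsilon < \Delta/3$ where you use $\Delta/2$; either works.

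The one place you go beyond the paper is the tie-breaking discussion, and there is a small slip there. You assert that on the event $\arg\max_f A_f \subseteq F_A^\star$, applying the same total order to $\arg\max_f A_f$ and to $F_A^\star$ yields the same element. This is not true: if $F_A^\star = \{f_1,f_2\}$ with $f_1$ minimal in the order but $\arg\max_f A_f = \{f_2\}$ (a strict subset, which your argument does not rule out), then $\fstar = f_2 \neq f_1 = f_\infty^\star$. More generally, when $|F_A^\star| > 1$ the empirical scores of the tied optima can keep overtaking one another as the sample sizes grow, so no deterministic tie-break applied to $\arg\max_f A_f$ forces convergence to a single element. The paper's own proof glosses over this point and in fact only establishes $\Pr(\fstar \in F_A^\star)\to 1$; your parenthetical suggestion to weaken the conclusion to exactly that statement is the correct fix.
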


\begin{proof}
By Proposition~\ref{prop:score-convergence}, for each fixed \(f\in F\),
\[
A_f \xrightarrow[\;\npert,\nloop,\ntest\to\infty\;]{P} J_A(f).
\]
Assumption (A3) states that \(F\) is finite. Let \(F=\{f_1,\dots,f_Q\}\).
For each \(i\in\{1,\dots,Q\}\), define
\[
A_i := A_{f_i},
\qquad
J_i := J_A(f_i).
\]
Since convergence in probability holds for each coordinate individually, the whole score vector converges in probability:
\[
(A_1,\dots,A_Q)
\xrightarrow[\;\npert,\nloop,\ntest\to\infty\;]{P}
(J_1,\dots,J_Q).
\]

To analyze the argmax selection rule, let
\[
F_A^\star := \bigl\{ i: J_i = \max_{j=1,\dots,Q} J_j \bigr\}
\]
be the (nonempty) set of population maximizers.  
For any index \(i\notin F_A^\star\), we have a strict gap
\[
J_{i^\star} - J_i > 0
\qquad\text{for every } i^\star\in F_A^\star .
\]
Define
\[
\Delta := \min_{i\notin F_A^\star}\, (J_{i^\star} - J_i),
\]
which is strictly positive. 

Now, fix any \(\varepsilon < \Delta/3\).  
We can see that, if all coordinates satisfy
\[
|A_i - J_i| < \varepsilon \qquad\text{for all } i=1,\dots,Q,
\]
then, for every \(i^\star\in F_A^\star\) and \(i\notin F_A^\star\),
\[
A_{i^\star} \ge J_{i^\star} - \varepsilon
> J_i + \Delta - \varepsilon
> J_i + 2\varepsilon
> A_i.
\]
Hence, whenever the coordinatewise errors are smaller than \(\varepsilon\), every truly optimal model has strictly larger estimated score than every non-optimal model, and thus the maximizer of the estimated scores must lie in the true maximizer set, ensuring that we get the true optimal model:
\[
\arg\max_{i=1,\dots,Q} A_i \subseteq F_A^\star.
\]

Since
\[
\Pr\!\left(
\max_i |A_i - J_i| > \varepsilon
\right)
\longrightarrow 0,
\]
it follows that
\[
\Pr\!\left(
\arg\max_{i} A_i \subseteq F_A^\star
\right)
\longrightarrow 1.
\]
Any deterministic tie-breaking rule selects a single element of \(\arg\max_i A_i\), so the selected model index converges in probability to a member of \(F_A^\star\).  
Rewriting indices back in terms of models, we obtain:
\[
\fstar \in \arg\max_{f\in F} A_f
\quad\Longrightarrow\quad
\fstar \xrightarrow{P} F_A^\star.
\]

Thus any PVF selection rule converges in probability to a population maximizer of \(J_A\).
\end{proof}

Interpretation: PVF is a consistent model selection procedure.  
In the limit of large validation sets and sufficient perturbation sampling, PVF will choose a model whose PVF score is maximal among all candidates.

\subsection{PVF as Selection Toward a User-Specified Property}

Finally, we interpret the limit objective \(J_A\) as an induced property determined by the design choices in PVF.

For a fixed perturbation rule \(K\), evaluation metric \(\mathrm{Eval}\), and aggregation rule \(A\), we define the induced property
\[
\Phi_{A,K}(f) := J_A(f) = \A\bigl(\delta_{J(f)}\bigr),
\qquad
J(f) = \mathrm{Eval}(f,\tilde{\p}).
\]
This property assigns to each model \(f\) a scalar that reflects its performance under the perturbed distribution \(\tilde{\p}\), summarized through the aggregation rule \(A\).

\begin{proposition}[PVF selects toward the induced property]
\label{prop:property}
Suppose (A1)–(A5) hold.  
Let \(\Phi_{A,K}(f) := J_A(f)\) be the induced property described above.  
Then
\[
\fstar
\xrightarrow[\;\npert,\nloop,\ntest\to\infty\;]{P}
f_{\mathrm{opt}},
\]
for some
\[
f_{\mathrm{opt}} \in \arg\max_{f\in F} \Phi_{A,K}(f).
\]
\end{proposition}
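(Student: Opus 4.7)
The plan is to recognize that Proposition \ref{prop:property} is essentially a relabeling of Proposition \ref{prop:selection} in the vocabulary of the induced property $\Phi_{A,K}$. No new convergence machinery is needed; the entire technical content (conditional i.i.d.\ structure, convergence in $\npert$, $\nloop$, $\ntest$, and finiteness of $F$) has already been handled in Propositions \ref{prop:iid}--\ref{prop:selection}. The task reduces to showing that the argmax set in the statement coincides with the set $F_A^\star$ that appeared in Proposition \ref{prop:selection}.

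First, I would unfold the definitions. By construction,
\[
\Phi_{A,K}(f) \;=\; J_A(f) \;=\; \A\bigl(\delta_{J(f)}\bigr),
\qquad J(f) = \mathrm{Eval}(f,\tilde{\p}),
\]
so $\Phi_{A,K}$ and $J_A$ are literally the same function on $F$. Consequently,
\[
\arg\max_{f\in F} \Phi_{A,K}(f) \;=\; \arg\max_{f\in F} J_A(f) \;=\; F_A^\star,
\]
where $F_A^\star$ is the maximizer set introduced in Proposition \ref{prop:selection}.

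Next, I would invoke Proposition \ref{prop:selection} directly: under (A1)--(A5) with $F$ finite, the PVF-selected model satisfies $\fstar \xrightarrow{P} f_\infty^\star$ for some $f_\infty^\star \in F_A^\star$. Setting $f_{\mathrm{opt}} := f_\infty^\star$ and substituting the identity $F_A^\star = \arg\max_{f\in F} \Phi_{A,K}(f)$ gives exactly the claimed convergence. A brief remark on tie-breaking should be included: when $F_A^\star$ is not a singleton, the probabilistic limit depends on the (deterministic) tie-breaking rule used inside the argmax, but every such rule returns a member of $\arg\max_f \Phi_{A,K}(f)$, which is all that is asserted.

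The main obstacle is essentially interpretive rather than technical: one must be careful that the "property'' language does not accidentally introduce a different object than $J_A$. Since $\Phi_{A,K}$ is defined through $\delta_{J(f)}$ (a degenerate distribution), $\A(\delta_{J(f)})$ is well-defined for any of the aggregation rules covered by (A5) (means, medians, and quantiles all return $J(f)$ on a point mass, and more general $L$-statistics do so as well), so no additional regularity on $\A$ beyond (A5) is required. Hence the proof is a one-line corollary of Proposition \ref{prop:selection} combined with the definitional identity $\Phi_{A,K} \equiv J_A$.
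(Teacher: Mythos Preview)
Your proposal is correct and matches the paper's own proof, which is a one-line appeal to Proposition~\ref{prop:selection} together with the definitional identity \(\Phi_{A,K}(f)=J_A(f)\). Your additional remarks on tie-breaking and on \(\A(\delta_{J(f)})\) being well-defined are accurate but go slightly beyond what the paper bothers to spell out.
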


\begin{proof}
This is a direct restatement of Proposition~\ref{prop:selection}, since \(\Phi_{A,K}(f)\) is defined to be \(J_A(f)\) for each \(f\in F\).
\end{proof}

Interpretation: once a user fixes a perturbation mechanism and an aggregation rule, these choices implicitly define a property \(\Phi_{A,K}\) of interest (for example, robustness to specific perturbations, emphasis on lower-tail performance, or any other stability notion encoded by \(K\) and \(A\)).  
The propositions in this section show that, under the stated assumptions, PVF is guaranteed to select models that maximize this induced property.

\section{Detailed Setups and Results of Synthetic Dataset Experiments}
\label{sec:synthetic-setups-results}

In this section, we present the detailed experimental setup for the synthetic dataset experiment, along with analysis about its results under proper hyperparameter tuning.

\subsection{Synthetic Dataset Experimental Setup}
\label{sec:synthetic-setup}

\paragraph{Data generation}
To simulate key attributes of clinical prediction tasks, 
we take into consideration class imbalance, limited sample size, measurement noise, and a preference for interpretable decision rules. Specifically, we create two classes in $\mathbb{R}^2$ (with independent Gaussian features $(x_1,x_2)$) that determine the true signal and three irrelevant independent Gaussian features $(x_3,x_4,x_5)$ that disturb the feature space. 
Ground-truth 2D cluster 1 is centered at $(0,0)$ and cluster 2 is at $(\mu,\mu)$ with $\mu \in \{0.1, 0.3, \ldots, 2.9\}$ to span low-through-high separability.
Class imbalance is fixed at 80\% negatives and 20\% positives. 
We study two dataset sizes, $n \in \{50, 100\}$, to reflect data-scarce settings.

\paragraph{Validation protocols}
Upon generating one synthetic dataset, 
it is split into 70\% for training and 30\% for validation using stratification to preserve class balance. 
We compare two model/feature selection strategies: PVF and the traditional method, based on whether they select the ground-truth feature pair $(x_1,x_2)$.

To mitigate imbalance in training folds, 
we apply SMOTE oversampling (and minority duplication when the minority count is below two) before fitting models on the training set.

\paragraph{Candidate models}
Clinical settings prioritize interpretability, 
often favoring simpler models like logistic regression or decision trees \cite{abdullah2021review, rudin2019stop, Christodoulou2019}. 
In practice, one can even decide the complexity of the model based on heuristic judgement on overfitting and underfitting. 
In this case, choosing the best model often reduces to choosing the right subset of features. 
This equivalence is clearest for bounded-depth decision trees or sparse logistic regression,  
whose predictive power hinges directly on the selected variables \cite{breiman2001statistical, chen2022tree, ning2022interpretable, tan2022figs, nguyen2023interpretable, suttaket2024rational}. 
Following these ideas, we use logistic regression models trained on pairs of features only. 
This yields ten pairs and thus ten model candidates, 
one per distinct pair among the five features.
By the Rashomon Effect,
there might be more than one pair of features that achieve optimal or almost-optimal performances.
For example, although the truly informative pair is $(x_1,x_2)$, 
due to a small number of samples, 
$(x_1,x_3)$ might accidentally performs equally well or even better than $(x_1,x_2)$. 
Our design deliberately creates the possibility of encountering this situation under the Rashomon Effect,
and aligns feature selection with model selection: 
the best model should be the one trained on the truly informative pair $(x_1,x_2)$. 
It creates a clear, ground-truth notion of “correct selection” while keeping the interpretability of the selected model.

\paragraph{Model evaluation metrics}
The primary analysis uses Intervention Efficiency (IE) at different intervention capacities $\gamma \in \{0.1, 0.3, 0.5, 0.7, 0.9\}$. 
For the  complementary analysis under traditional metrics, 
we report F1 Scores when class separability is relatively low ($\mu \leq 2.5$) and accuracies when class separability is very high ($\mu > 2.5$).
This intentionally favors the traditional method, 
because based on our empirical results, 
using F1 Score when $\mu \leq 2.5$ and accuracy when $\mu > 2.5$ makes the traditional method more likely to select the best model (truly informative feature pair). 
Therefore, we force PVF to adopt the same setting to stress-test it and avoid underestimating the traditional method by the "wrong" metric.
Selection is always compared within a fixed metric: 
PVF versus traditional under IE, or PVF versus traditional under F1 Score/accuracy.

\paragraph{Hyperparameters, settings and repetitions}
We vary the dataset size ($n = 50$ or $100$) and the separability parameter $\mu$, 
which takes values uniformly spaced between 0.1 and 2.9 with a step size of 0.2,
and perturbation Gaussian noise standard deviation $\sigma \in \{10^{-6}, 10^{-5}, 10^{-4}, 10^{-3}, 10^{-2}, 10^{-1}\}$ as different perturbation controls. 
For each hyperparameter configuration ($2 \times 15 \times 6 = 180$ combinations) in \Cref{tab:hyperparameters}, 
we generate 5000 independent synthetic datasets to obtain stable comparison results and uncertainty summaries. 
For other hyperparameters discussed in \Cref{pvf-table} while not in \Cref{tab:hyperparameters}, 
we empirically set $d = 5$, $k = 7$ and $M = 100$ to balance the computational cost and perturbation effect.
The aggregation function $\mathcal A$ of PVF is manually chosen to be the 25th percentile (Q1) of a model's validation scores across perturbed validation sets for demonstration,
as it is well suited to high-stakes clinical contexts where worst-case generalization matters and they could correspond to the cases where all independent noises compound in a way that is most disturbing to the model's prediction, while not being too conservative.
Note that the choice of aggregation function in other applications could be decided according to specific user requirements of the attributes of the selected model,
as \Cref{prop:property} has stated that ideally PVF should be able to select the optimal model toward any user-defined property.

\begin{table*}[t]
\centering
\caption{Summary of hyperparameters and settings in the synthetic dataset experiment}
\label{tab:hyperparameters}
\begin{tabular}{l l}
\hline
\textbf{Hyperparameters} & \textbf{Values} \\
\hline
Original Data Size ($n$) & 50, 100 \\
Center of Cluster 2 ($\mu$) & From 0.1 to 2.9 with step 0.2 \\
Perturbation Noise Standard Deviation ($\sigma$) & 1e-6, 1e-5, 1e-4, 1e-3, 0.01, 0.1 \\
\hline
\end{tabular}
\end{table*}

\paragraph{Selection method assessment}
For each synthetic dataset, we record the feature pair chosen by PVF and the traditional method. Specifically, the pair of features among the total 10 pairs that achieves the highest logistic regression performance on the perturbed validation sets/the original validation set is selected as the preferred pair for the corresponding method and synthetic dataset. 
The quality of a selection method is assessed by the frequency with which a method chooses the truly informative pair $(x_1,x_2)$ out of 5000 repetitions. 
Specifically, for each hyperparameter configuration $h = (n, \mu, \sigma)$,the performance difference between the PVF and the traditional method is computed as
\[
d_h = c^{\text{PVF}}_h - c^{\text{Trad}}_h,
\]
where $c^{\text{PVF}}_h$ and $c^{\text{Trad}}_h$ are the counts of selecting the truly informative pair $(x_1,x_2)$ for configuration $h$.

\paragraph{Compute environment and runtime}
Synthetic sweeps were executed on a SLURM-managed computing cluster. 
Each job was allocated 11 CPU cores and 30 GB RAM within a fixed Python environment. 
Cluster nodes are equipped with 94-core x86-64 CPUs and about 740 GB RAM, running Linux kernel 5.14.0. 
The size-50 sweep (i.e., evaluating every hyperparameter configuration for data size 50) required 1642 minutes, and the size-100 sweep required 1598 minutes. 
Data generation time was negligible relative to model fitting.
Scaling by cores, repetitions (5000), and hyperparameter settings (180),
the inner-loop PVF selection for one synthetic dataset and one hyperparameter on a single CPU is about 1.2 seconds for all model evaluation metrics.

\subsection{Synthetic Dataset Experimental Results}\label{sec:synthetic-results}
In this subsection, 
we present the experimental results in the form of a comparison between the performance of PVF with tuned $\sigma$ and that of the traditional method.
Across all hyperparameter configurations, 
statistical significance is confirmed using paired $t$–tests. 
During the experiments, 
we found that a key factor influencing the effectiveness of the proposed method is class separability, 
represented by the parameter $\mu$. 
Hence, we categorized $\mu$ into three separability scenarios: 
low (0.1 – 0.9), moderate (1.1 – 1.9), and high (2.1 – 2.9). 
We aggregate the differences $d_h$'s within each class separability range to get the mean difference for the corresponding range.

\paragraph{Data size 50} 
Across all class separability, 
PVF always outperforms the traditional method no matter which model evaluation metric is used.
The performance difference is overall more significant for lower $\gamma$'s.

Specifically, 
when the class separability is low, 
PVF overall selects the truly informative feature pair 42 to 65 more times than the traditional method does, 
across all metrics (\Cref{fig:best-50-1}).
This consistently positive difference confirms the superiority of PVF over the traditional method in low-separability cases, 
regardless of the metric.

\begin{figure}[tbp]
    \centering
    \includegraphics[width=0.47\textwidth]{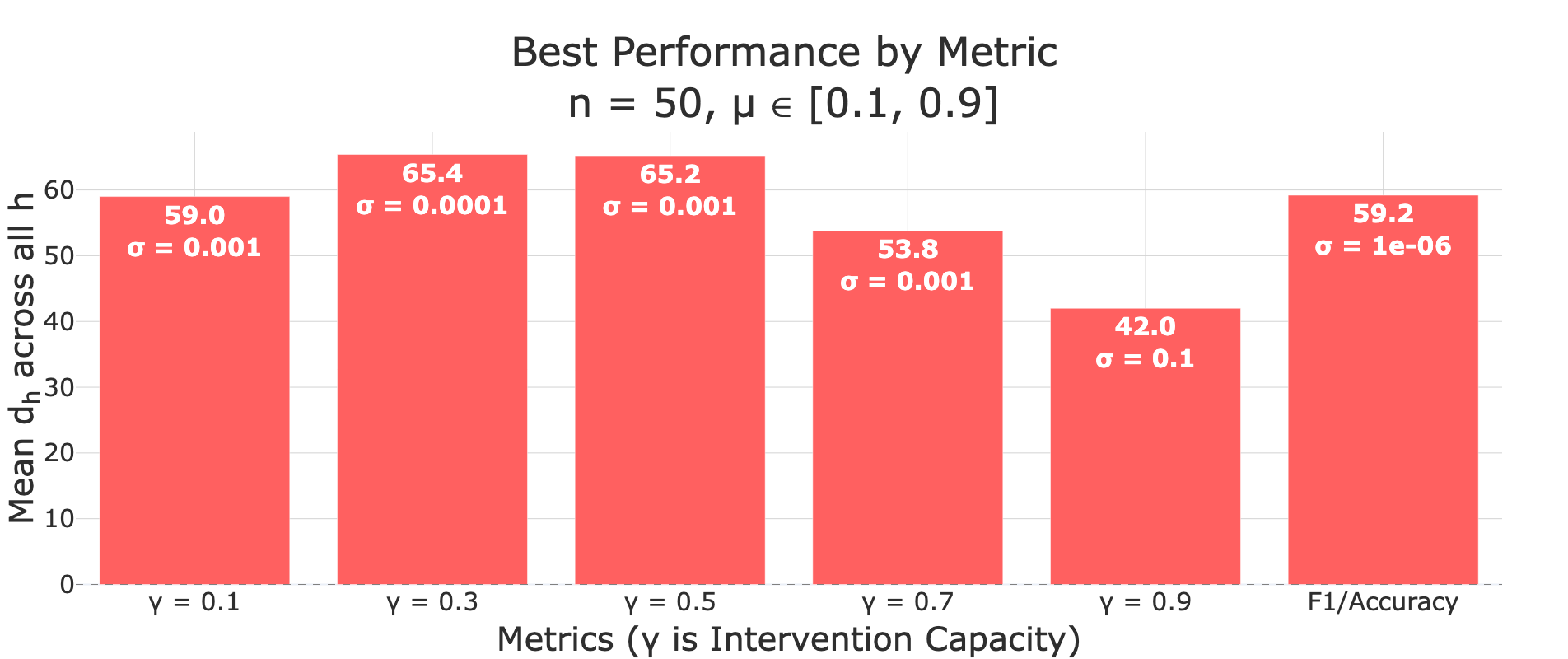}
    \caption{The best performance of PVF on synthetic datasets ($n = 50, \mu \in [0.1, 0.9]$) under different  metrics, when $\sigma$ is tuned. PVF identifies the truly informative feature pair more frequently across all evaluation metrics. 
}
    \label{fig:best-50-1}
\end{figure}

When the class separability is moderate, 
built on consistent positive differences, 
the value of difference fluctuates more,
with larger values emerging at $\gamma = 0.1, 0.5, 0.7$ (\Cref{fig:best-50-2}). 
This could be due to a higher variability in synthetic datasets and the resulting complex relationship between IE and the classification task. 
F1 score/accuracy exhibit substantial positive differences, 
confirming that PVF is promising when these two traditional metrics are of interest in moderate-separability cases.

When the class separability is high, 
we can witness a clearer pattern in the relationship between the performance difference and $\gamma$. 
That is, when $\gamma$ increases, 
the positive difference shrinks almost monotonically (\Cref{fig:best-50-3}), 
with smaller $\gamma$'s (0.1, 0.3 and 0.5) enjoying a rising positive difference while larger $\gamma$'s (0.7, 0.9) suffering from a loss in positive difference. 
PVF with F1 Score/accuracy still maintains a large edge in the positive performance. 
These observations confirms that under large separability circumstances,
not only is PVF consistently better than the traditional method, 
but also it brings large improvements when the metric of interest is IE with lower $\gamma$'s or F1 Score/accuracy.

\paragraph{Data size 100} 
Although generally PVF still outperforms the traditional method for most metrics when data size is 100, 
a small number of cases emerge where the traditional method becomes preferable.
Specifically, with moderate separability and $\gamma = 0.7$ or $0.9$,
the traditional method selects the true feature pair a small number of times ($<$20) more than PVF (\Cref{fig:best-100-2}).

Nevertheless, in all other cases, PVF consistently outperforms the traditional method (\Cref{fig:best-100-1,fig:best-100-3}).
Besides the general trend that the positive edge is larger at smaller $\gamma$'s as in data size 50,
the strongest comparison occurs when the separability is high and the metrics of interest are IE with $\gamma$ (0.1, 0.3 and 0.5) or F1 Score/accuracy (\Cref{fig:best-100-3}). 
Overall, 
despite a small number of slightly underperforming cases, 
PVF under most metrics enjoys a large edge over the traditional method with data size 100.

\begin{figure}[tbp]
    \centering
    \includegraphics[width=0.47\textwidth]{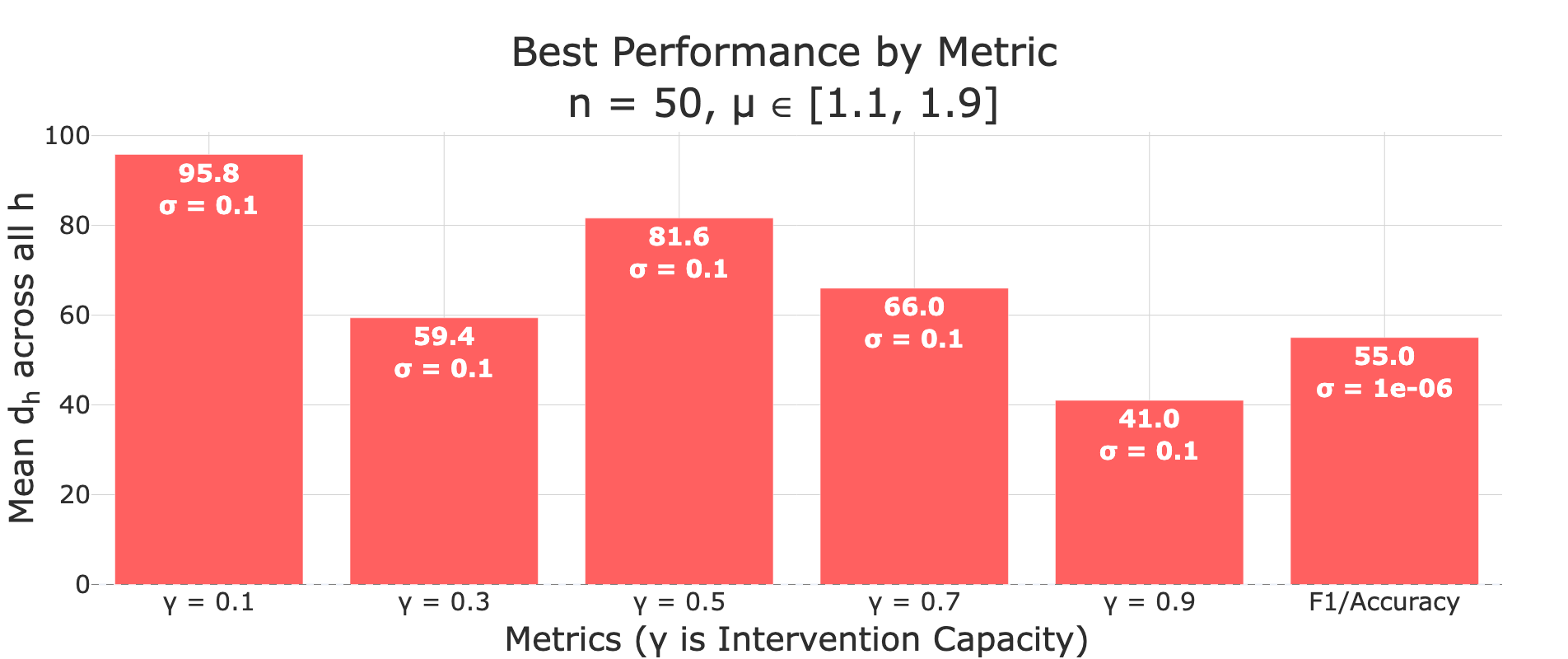}
    \caption{The best performance of PVF on synthetic datasets ($n = 50, \mu \in [1.1, 1.9]$) under different  metrics, when $\sigma$ is tuned. PVF continues to outperform the traditional method across all metrics, with notably higher improvements when $\gamma \in \{0.1, 0.5, 0.7\}$. Variability increases relative to low-separability conditions, reflecting more complex interactions between IE and classification difficulty.}
    \label{fig:best-50-2}
\end{figure}

\begin{figure}[tbp]
    \centering
    \includegraphics[width=0.47\textwidth]{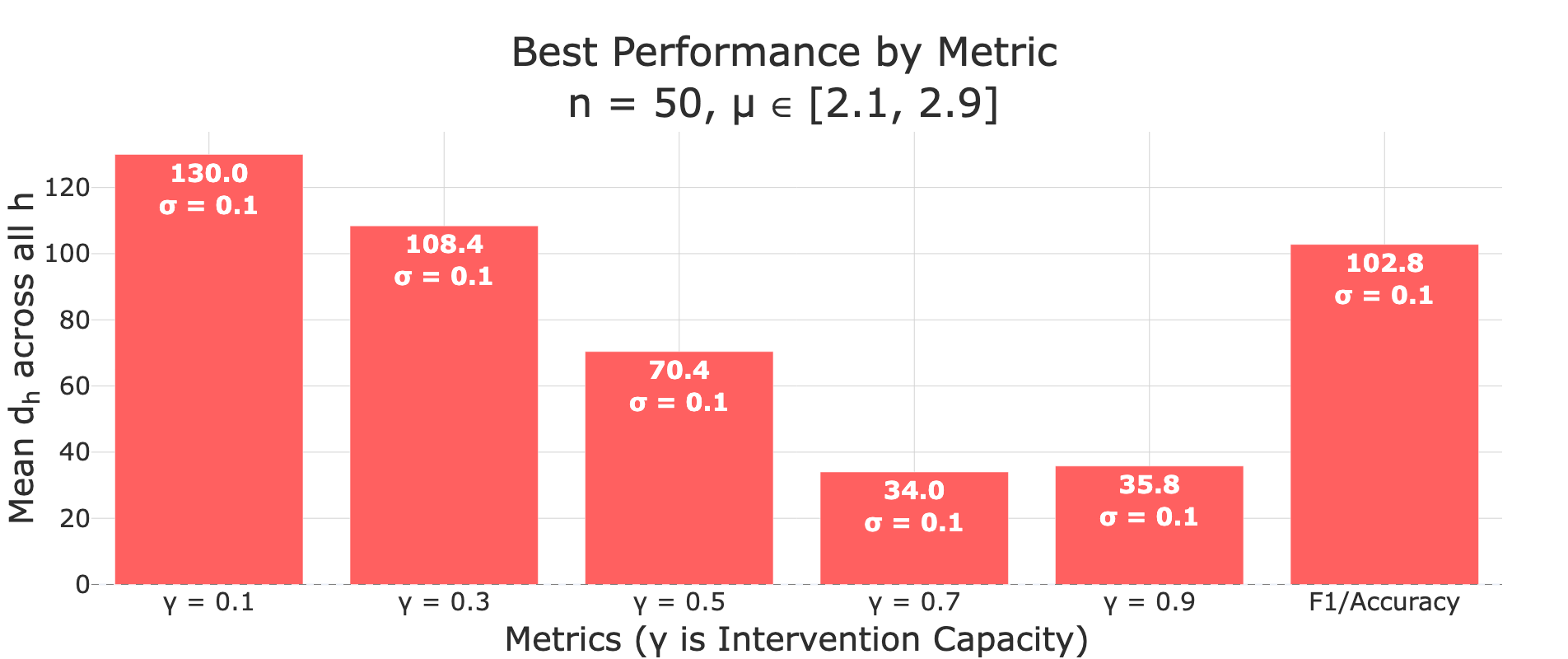}
    \caption{The best performance of PVF on synthetic datasets ($n = 50, \mu \in [2.1, 2.9]$) under different  metrics, when $\sigma$ is tuned. PVF still outperforms the traditional method, but the magnitude of improvement declines almost monotonically as $\gamma$ increases.}
    \label{fig:best-50-3}
\end{figure}

\begin{figure}[tbp]
    \centering
    \includegraphics[width=0.47\textwidth]{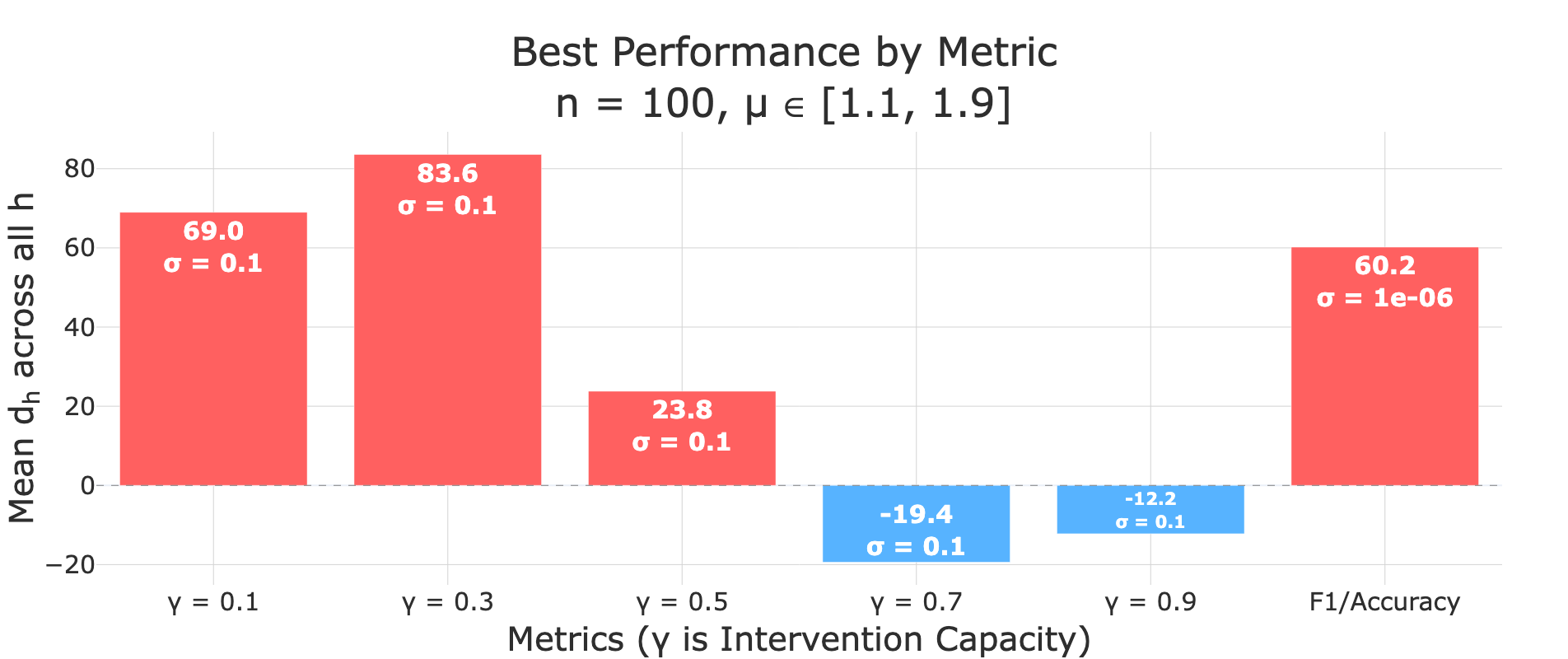}
    \caption{The best performance of PVF on synthetic datasets ($n = 100, \mu \in [1.1, 1.9]$) under different metrics, when $\sigma$ is tuned. Although PVF generally outperforms the traditional method, a few cases where the traditional method selects the true feature pair slightly more often appear when $\gamma \in \{0.7, 0.9\}$ }
    \label{fig:best-100-2}
\end{figure}

\begin{figure}[tbp]
    \centering
    \includegraphics[width=0.47\textwidth]{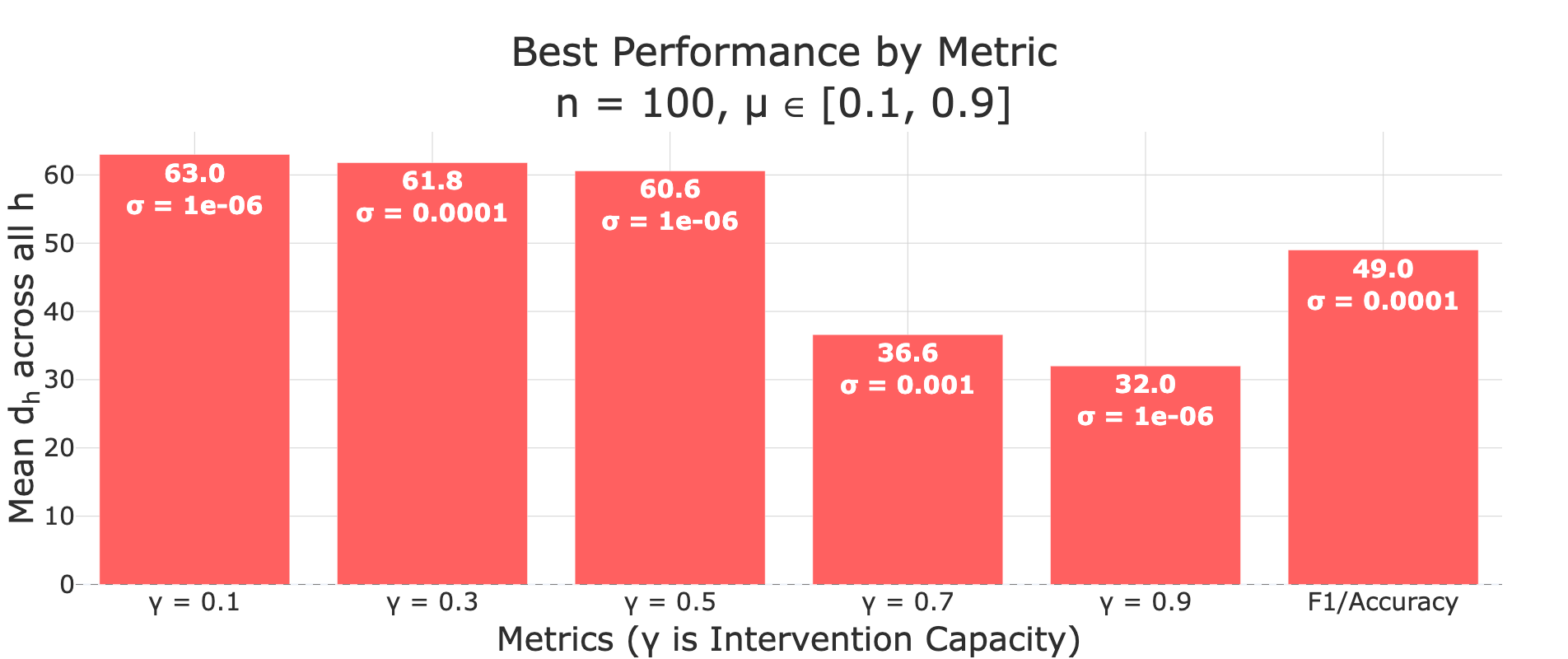}
    \caption{The best performance of PVF on synthetic datasets ($n = 100, \mu \in [0.1, 0.9]$) under different metrics, when $\sigma$ is tuned. PVF consistently selects the informative feature pair more frequently across all metrics, with larger advantages at smaller $\gamma$ values.}
    \label{fig:best-100-1}
\end{figure}

\begin{figure}[tbp]
    \centering
    \includegraphics[width=0.47\textwidth]{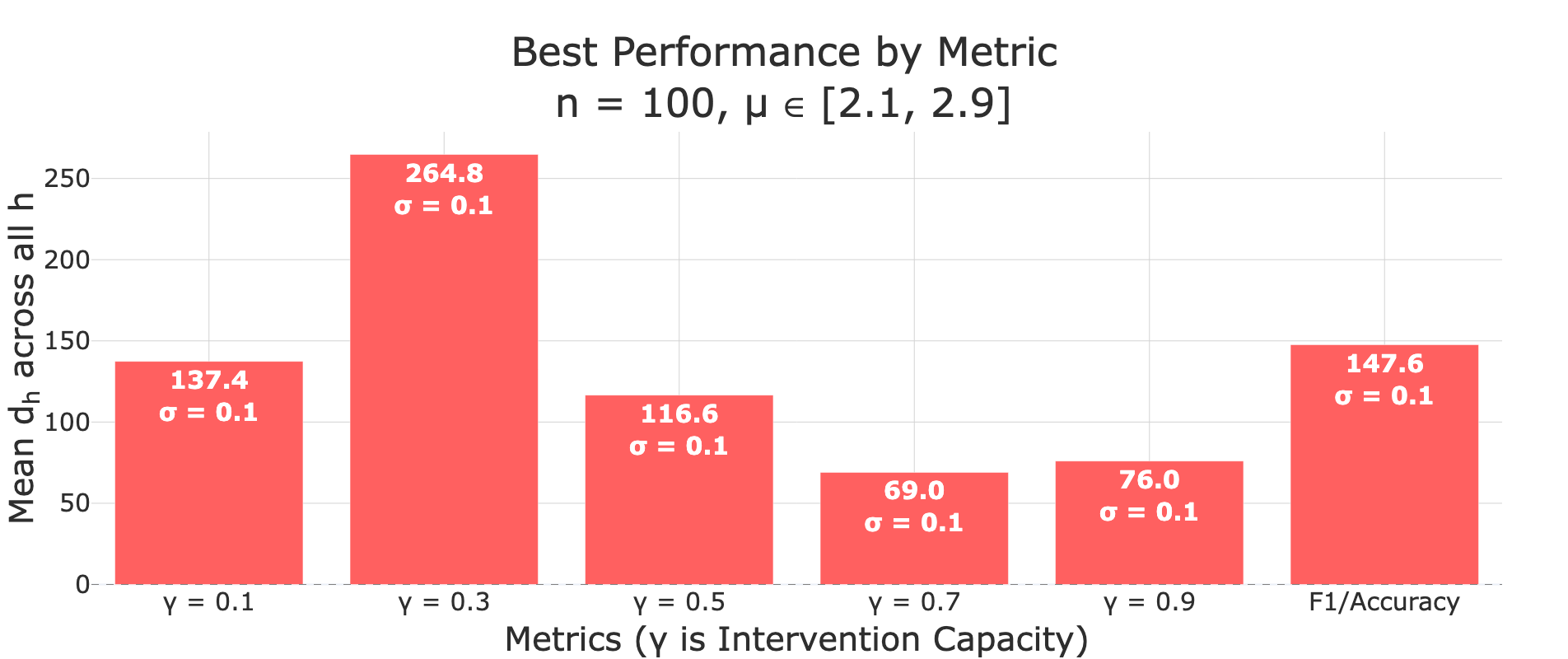}
    \caption{The best performance of PVF on synthetic datasets ($n = 100, \mu \in [2.1, 2.9]$) under different metrics, when $\sigma$ is tuned. PVF exhibits its strongest improvements under high separability, especially when $\gamma \in \{0.1, 0.3, 0.5\}$ and for F1 Score/accuracy.}
    \label{fig:best-100-3}
\end{figure}

\section{Detailed Setups of Real Data Experiments} 
\label{sec:real-setup}
In this section, we present the detailed experimental setup for the real dataset experiments.

\paragraph{Datasets and preprocessing}
We evaluate on two public clinical datasets to ground the study in realistic settings. 
The Breast Cancer Wisconsin (Diagnostic) dataset \cite{wolberg1993breast} has 569 samples, 212 positives, and no missing values; 
no preprocessing beyond standard scaling is needed.
The Cervical Cancer (Risk Factors) dataset \cite{fernandes2017cervical} has 808 samples and substantial missing values. 
We discard features with more than 50\% missing values and impute the remaining entries with $k$-nearest neighbors ($k=5$). 
This yields 668 usable samples with 34 features. 
Among the four diagnostic targets, we use the Schiller test outcome because its higher prevalence provides adequate positive cases for stable validation.

\paragraph{Validation protocols, evaluation metrics, hyperparameters and settings}
To emulate small-cohort deployments and create multiple independent model selection tasks, 
we partition each dataset into several disjoint subsets, each containing 100 samples, and perform model selection within each subset. 
For the Breast Cancer dataset ($n = 569$), 
five non-overlapping 100-sample subsets are formed from 500 samples, 
while the remaining 69 samples are reserved as part of the external test data. 
For a given subset, 
the external test set consists of all remaining samples not included in that subset, 
i.e., $n - 100 = 469$ instances for the Breast Cancer dataset. 
Within each 100-sample subset, 
we further divide the data into five different train–validation splits, 
each time using a distinct split for validation while keeping the rest for training. 
Repeating this procedure five times reduces the influence of random partitioning within the subset and enables a more reliable assessment of whether one selection method consistently identifies better models across multiple validation scenarios and possibilities. 
This design focuses on comparing the stability and consistency of model selection performance, 
rather than evaluating the final predictive accuracy of any individual model.

The same configuration is applied to the Cervical Cancer dataset ($n = 668$), 
resulting in six disjoint 100-sample subsets. 
All train–validation divisions are stratified to preserve class ratios. 
We apply standard scaling to the training data and use the same transformation on validation data to (i) maintain comparability with the synthetic experiments and (ii) make the perturbation noise scale in PVF interpretable across datasets.

We compare PVF (using the 25th percentile aggregator) with the traditional method under the same evaluation metrics and hyperparameters as in the synthetic experiments, except for the perturbation controls. 
For the perturbation controls, 
we extend the previous range of ${noise}\_{add}\_{std}$ 
($\{10^{-6}, 10^{-5}, 10^{-4}, 10^{-3}, 10^{-2}, 10^{-1}\}$) 
to include $\{2 \times 10^{-1}, 3 \times 10^{-1}, 4 \times 10^{-1}, 5 \times 10^{-1}\}$ 
to further examine robustness gains under stronger perturbations. 
Additionally, for ordinal and nominal features, 
we empirically set the feature-changing probability to 0.1 
and the decay parameter $\lambda$ for ordinal features to 0.1 
(see \Cref{sec:pvf} for details on $\lambda$), 
leaving the exploration of alternative choices to future work.

\paragraph{Candidate models}
We use decision trees as the base model class to prioritize interpretability and to account for potential non-linear separability. 
Within each 100-sample subset, 
we generate a diverse pool of candidate models by repeatedly subsampling 70\% of the training data and training one tree per subsample (maximum depth 4 for interpretability), 
yielding 100 candidates in total. 
This resampling procedure intentionally produces multiple models with similar overall accuracy but differing feature splits and decision paths, 
characterizing the presence of the the Rashomon Effect. 

\paragraph{Selection method assessment}
For each validation split of each data subset and each model selection method, we pick one model from the candidate pool using the chosen metric based on validation performance. The selected model is then evaluated on the external test set associated with this piece using the same metric as in validation. This aligns validation with testing and approximates ground truth generalization. We repeat the procedure across subsets and splits, as explained in the last paragraph, and aggregate the testing performance w.r.t the model evaluation metrics, subsets and splits.

\paragraph{Compute environment and runtime}
Real-data experiments run locally on an Apple MacBook Pro (M4 Pro, 12-core CPU, 16-core GPU, 24 GB RAM) in a fixed Python environment. The two datasets are processed concurrently with a total time of 424.3 minutes.

\section{Sensitivity Analysis}\label{sec:sensitivity-analysis}
In this section, we discuss how sensitive is PVF to the problem setting and the key hyperparameter in PVF: perturbation control, i.e., the standard deviation of the Gaussian perturbation noise, $\sigma$.

\subsubsection{Synthetic Dataset Experiment Sensitivity Analysis}
In this subsection, 
we isolate the effect of the perturbation noise standard deviation $\sigma$ by holding other settings fixed and tracking the mean selection difference within each class separability case. 

\paragraph{Data size 50}
We start with data size 50. 
First, when class separability is low, 
very small perturbations ($\sigma \le 10^{-3}$) maintain a stable, positive advantage for PVF (\Cref{fig:line-50-1}). 
In this range the perturbed validation replicates the original validation set closely while still eliminating some extent of occasionality of a single split.
Increasing $\sigma$ beyond this range can reduce the fidelity of the perturbed sets to the underlying problem, 
compressing score differences across model candidates and generally weakening the advantage, except for larger $\gamma$'s (0.7 and 0.9). 
The drop is pronounced at $\sigma = 10^{-2}$ and makes the difference  negative at $\sigma = 10^{-1}$ for F1 Score.

Second, at moderate class separability, 
performance difference often slightly decreases from a stable positive value when $\sigma$ goes from a small value towards a moderate one (\Cref{fig:line-50-2}),
while the difference increases back to a higher value when $\sigma$ reaches the largest value in the setting ($\sigma=0.1$). 
This could be because: 
a small perturbation already has positive effects on selecting the best model because it stress-test the candidates; 
a slightly stronger perturbation enables reshuffling the rankings of the candidate models to a greater extent but not yet strong enough to consistently separate stable from unstable candidates, 
while the disturbance has exceeded the benefits brought by the stress test; 
at the upper end of the perturbation range, 
the stress test becomes the strongest,
and somehow the advantage recovers.

Third, at high class separability, 
larger perturbations amplify PVF’s advantage almost certainly for all metrics (with a small fluctuation in the curve for $\gamma = 0.9$), as shown in \Cref{fig:line-50-3}. 
With a large perturbation, 
well-supported candidate models remain top of the rank across perturbed validation sets, 
while models whose strength is fraudulently exaggerated by some noisy patterns due to small sample size fall in rank easily.

\paragraph{Data size 100}
With $n=100$ and low separability, the small-$\sigma$ region is not merely flat but consistently above zero (\Cref{fig:line-100-1}), 
indicating a robust positive margin for PVF across all very small perturbations. 
The guidance here is to keep $\sigma$ not too large,
and every value in this wide range would be very positively effective. 
In contrast, when $\sigma$ becomes too large under low separability, 
the same degradation appears as in $n=50$. 

In moderate-separability settings, 
for IE at more stringent capacities ($\gamma=0.1,0.3$), PVF’s positive performance difference margin is relatively stable at small perturbations ($\sigma\le 10^{-3}$), dips around $\sigma = 10^{-2}$, and then increases sharply at $\sigma=10^{-1}$ (\Cref{fig:line-100-2}). This agrees with the results from $n=50$ that a weak stress test is already informative; while a mid-range $\sigma$ can reshuffle the rankings of the candidate models to a larger extent without consistently separating stable candidates from unstable ones; and a larger $\sigma$ provides a decisive stress that favors candidates whose rankings remain intact under huge perturbation.

For IE at $\gamma=0.5$, the advantage is near zero or slightly negative at very small $\sigma$, then becomes positive only at the largest $\sigma$ ($10^{-1}$), indicating that stronger stress is required before PVF separates candidates meaningfully in this setting. 

For IE at $\gamma=0.7$ and $0.9$, 
PVF is negative everywhere to a small extent, 
especially at smaller $\sigma$'s. 
IE at $\gamma=0.9$ drops further at $\sigma=10^{-2}$, 
with a large recovery at $\sigma=10^{-1}$, 
while IE at $\gamma=0.7$ simply recovers at $\sigma=10^{-3}$. 
Here, when the intervention capacity is large, 
small perturbations add variation without enough discriminatory power over models.
Under F1, 
PVF’s margin slightly decrease with increasing $\sigma$, 
while remaining above zero. 
This indicates that PVF is still beneficial, 
and bad performances under IE at $\gamma=0.7$ and $0.9$ may simply due to IE's lack of capability of recognizing truly good models in large-capacity scenarios.

In high-separability settings, 
increasing $\sigma$ raises the curves for both $n=100$ and $n=50$, 
with $n=100$ exhibiting especially large gains at the top of the perturbation range \Cref{fig:line-100-3}. 
This is consistent with the results from $n=50$ that higher separability enables the stress test from PVF to be more informative and effective.

\paragraph{Summary of sensitivity analysis for synthetic dataset experiments} 
Overall, for IE, 
small perturbations are preferred when class separability is low, 
and large perturbations become beneficial as separability increases. 
Also, IE with higher $\gamma$'s generally enjoys less benefit from PVF than the IE with lower $\gamma$'s does. 
For F1 Score, 
except for high-separability cases, 
keeping $\sigma$ small always gives stably good results. 
Only when the separability is high does using large $\sigma$ gives better results, 
with a steep improvement as for IE.

\begin{figure}[tbp]
    \centering
    \includegraphics[width=0.47\textwidth]{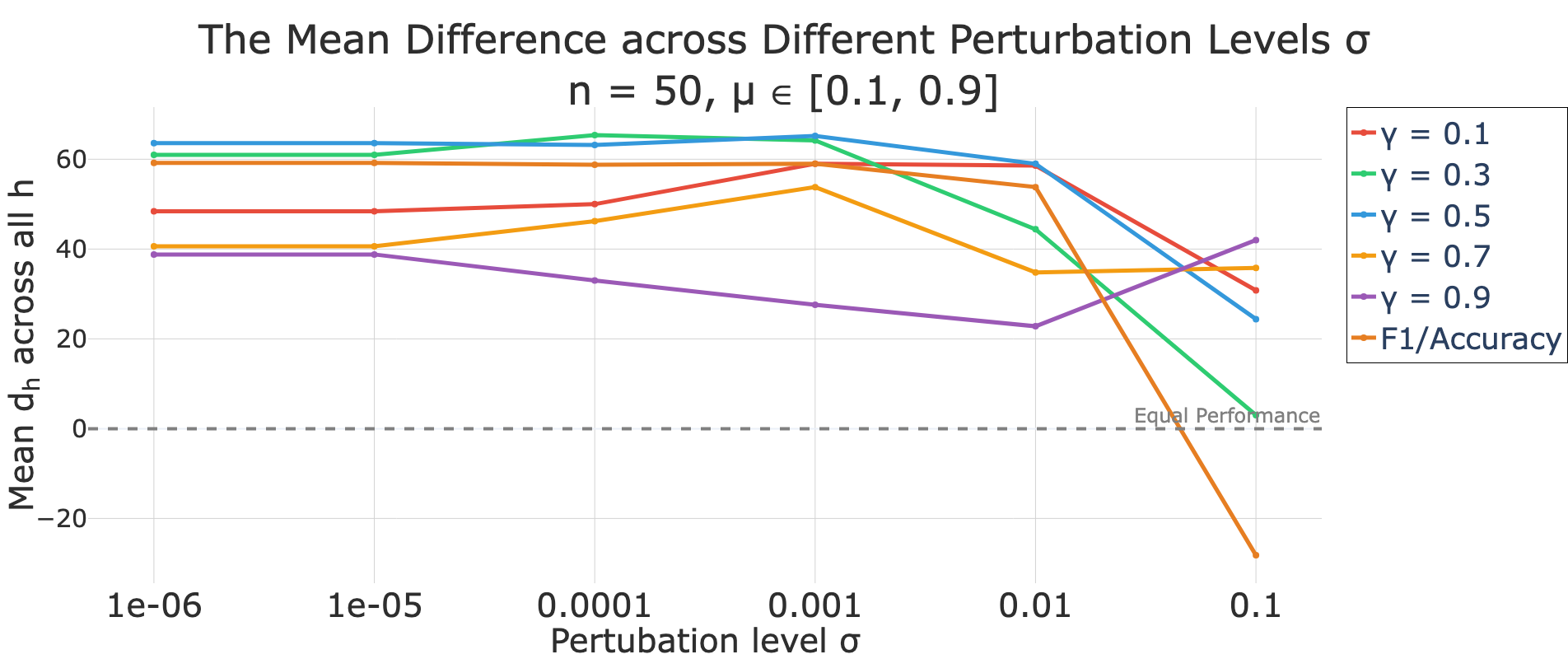}
    \caption{The performance difference between PVF and the traditional method on synthetic datasets ($n = 50, \mu \in [0.1, 0.9]$) across perturbation levels $\sigma$ and evaluation metrics. PVF maintains a stable positive advantage at relatively small $\sigma$ ($\sigma \leq 10^{-2}$), while larger perturbations slightly reduce its margin and can overturn it for F1 Score/accuracy.}
    \label{fig:line-50-1}
\end{figure}

\begin{figure}[tbp]
    \centering
    \includegraphics[width=0.47\textwidth]{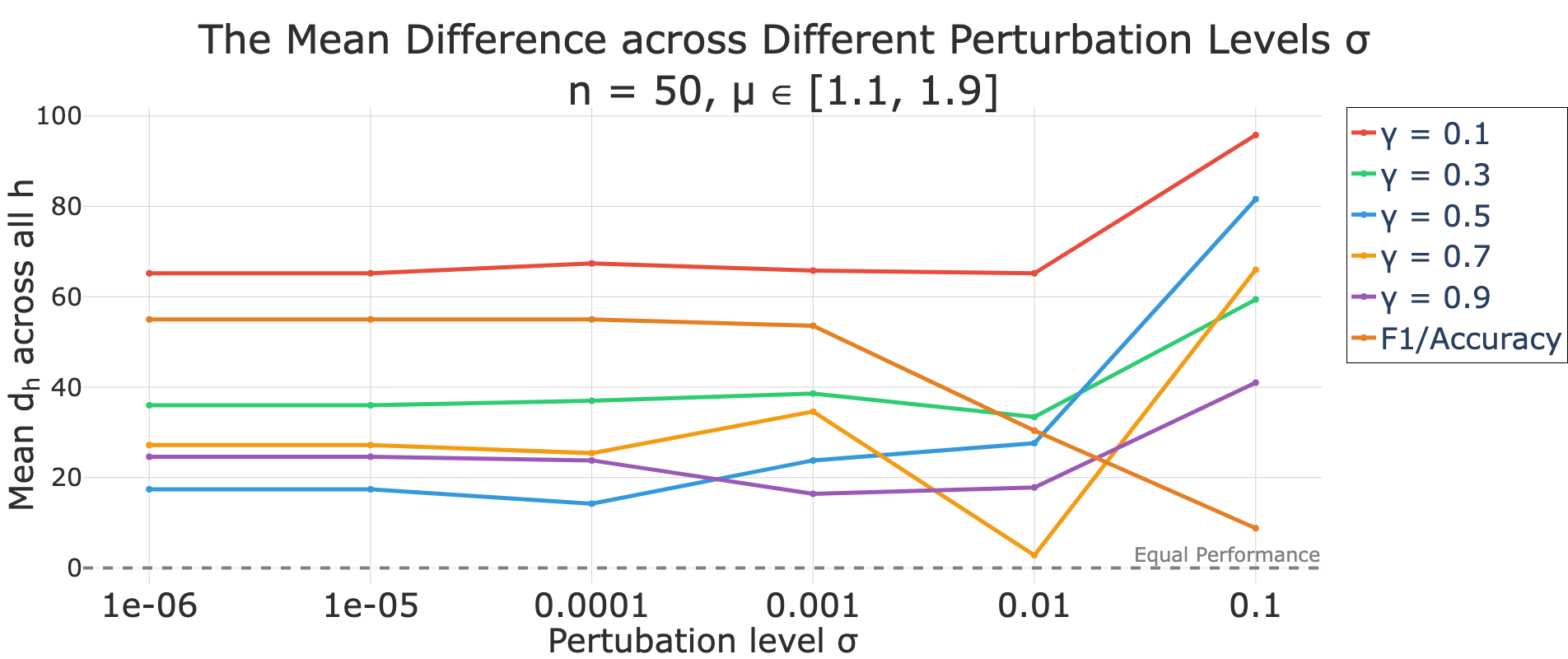}
    \caption{The performance difference between PVF and the traditional method on synthetic datasets ($n = 50, \mu \in [1.1, 1.9]$) across perturbation levels $\sigma$ and evaluation metrics. The advantage of PVF dips as $\sigma$ increases from small to moderate values, but recovers at the largest perturbation level ($\sigma = 0.1$).}
    \label{fig:line-50-2}
\end{figure}

\begin{figure}[tbp]
    \centering
    \includegraphics[width=0.47\textwidth]{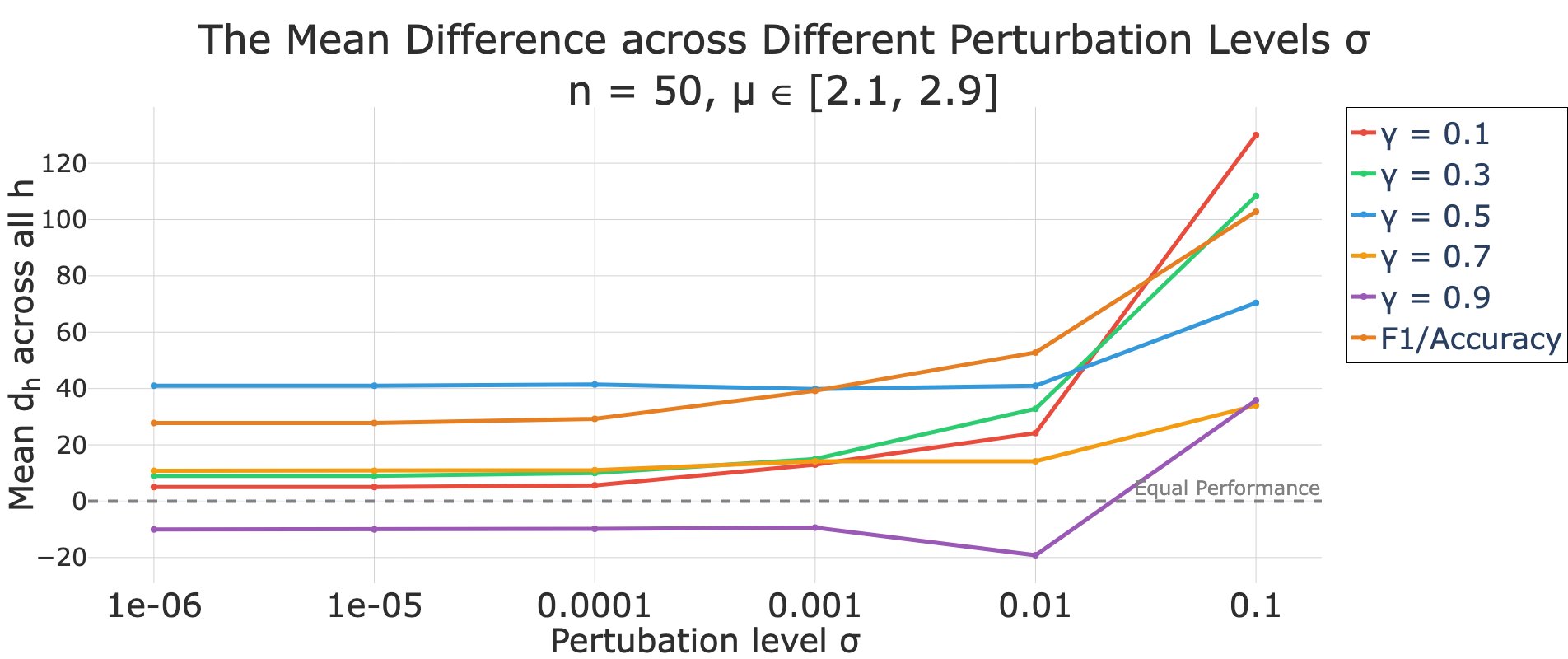}
    \caption{The performance difference between PVF and the traditional method on synthetic datasets ($n = 50, \mu \in [2.1, 2.9]$) across perturbation levels $\sigma$ and metrics. Larger perturbations consistently amplify PVF’s advantage, with only minor fluctuations when $\gamma = 0.9$.}
    \label{fig:line-50-3}
\end{figure}

\begin{figure}[tbp]
    \centering
    \includegraphics[width=0.47\textwidth]{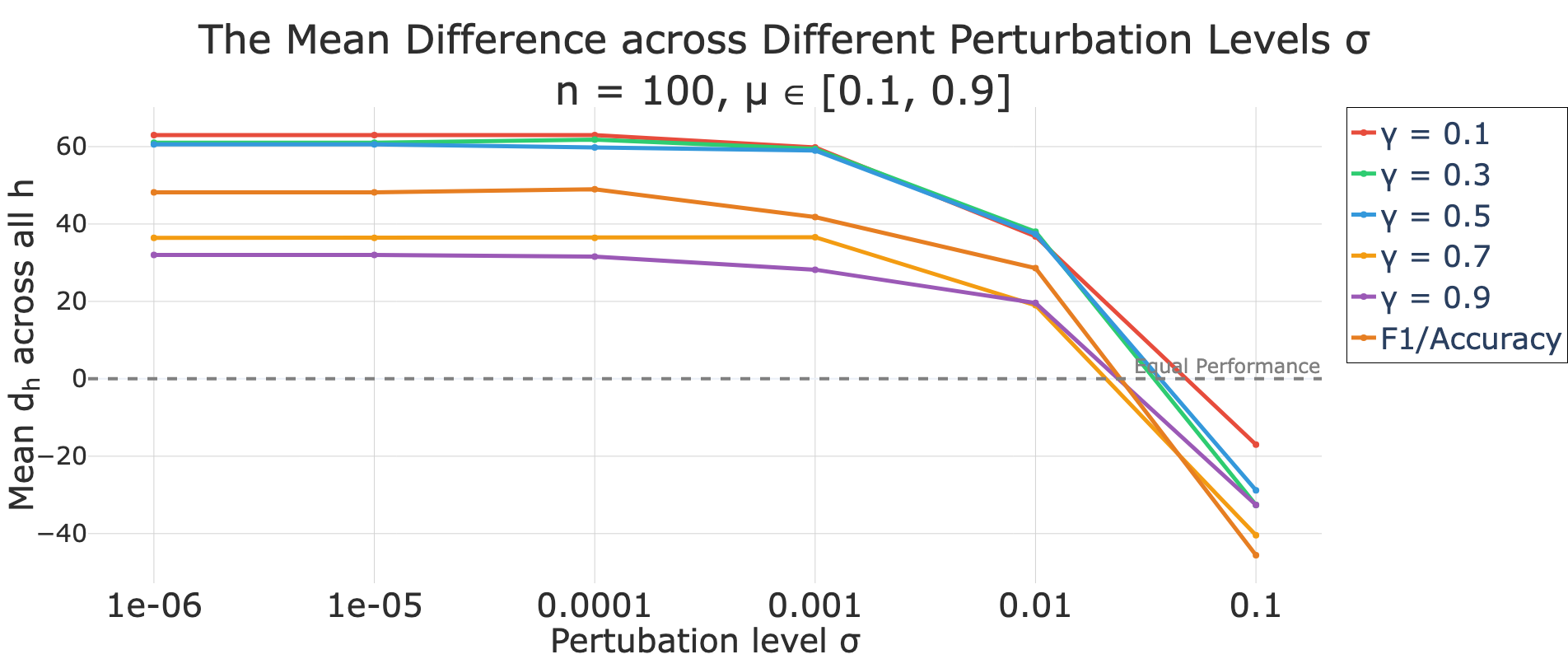}
    \caption{The performance difference between PVF and the traditional method on synthetic datasets ($n = 100, \mu \in [0.1, 0.9]$) across perturbation levels $\sigma$ and metrics. PVF maintains a positive margin over small to moderate $\sigma$, while the largest large perturbation reverses the advantage, similarly to the $n = 50$ setting.}
    \label{fig:line-100-1}
\end{figure}

\begin{figure}[tbp]
    \centering
    \includegraphics[width=0.47\textwidth]{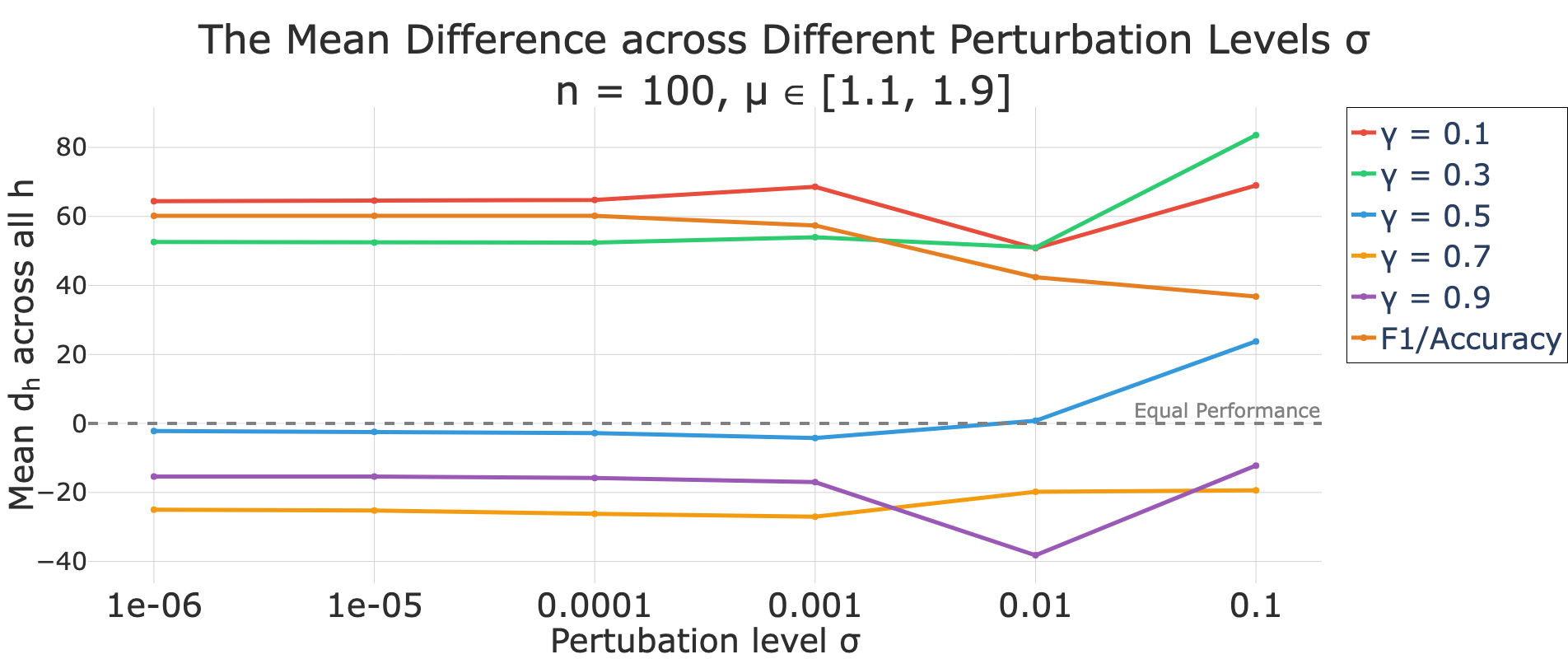}
    \caption{The performance difference between PVF and the traditional method on synthetic datasets ($n = 100, \mu \in [1.1, 1.9]$) across perturbation levels $\sigma$ and metrics. PVF shows stable gains at small $\sigma$, dips at mid-range $\sigma$, and recovers at the largest perturbation level, with the pattern varying by evaluation metrics.}
    \label{fig:line-100-2}
\end{figure}

\begin{figure}[tbp]
    \centering
    \includegraphics[width=0.47\textwidth]{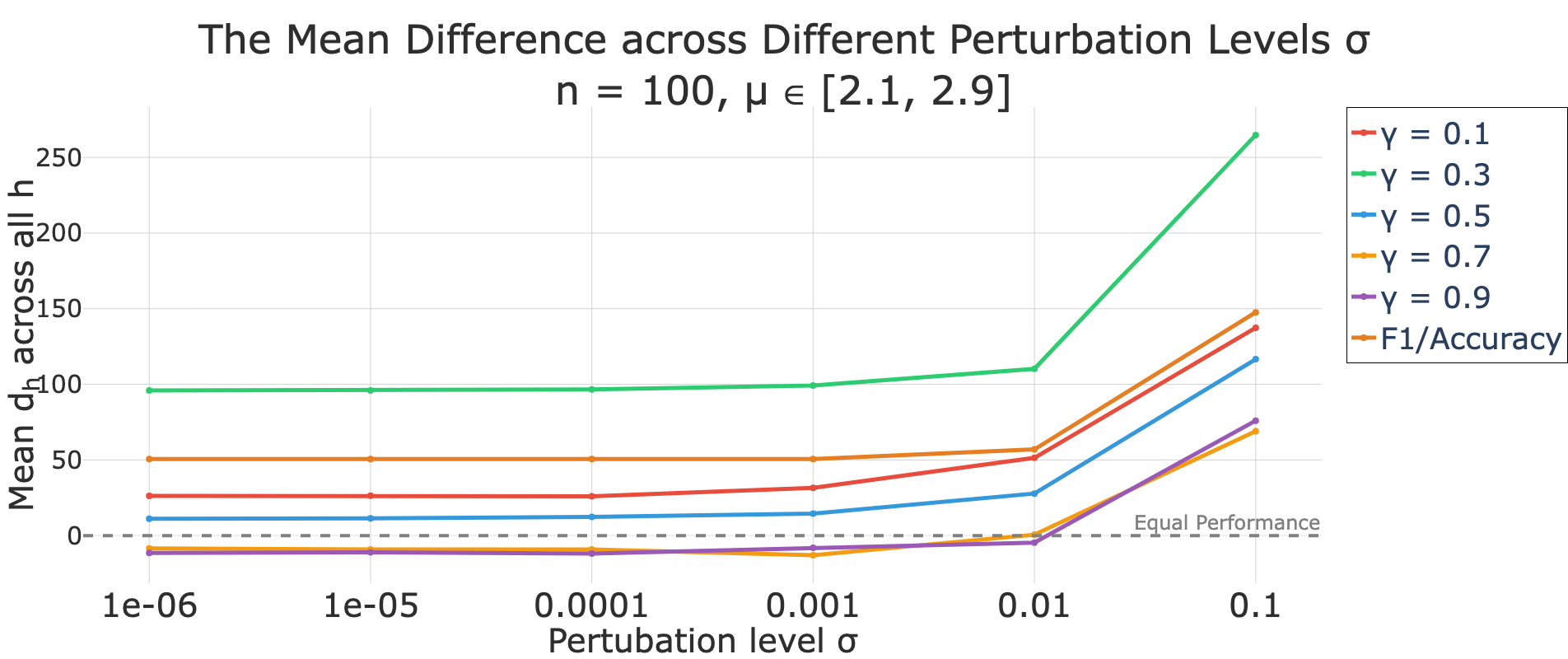}
    \caption{The performance difference between PVF and the traditional method on synthetic datasets ($n = 100, \mu \in [2.1, 2.9]$) across perturbation levels $\sigma$ and metrics. Increasing $\sigma$ steadily amplifies PVF’s advantage, producing especially strong improvements at the largest perturbation level.}
    \label{fig:line-100-3}
\end{figure}

% ----------------------------
% Real
% ----------------------------
\subsubsection{Real Dataset Experiment Sensitivity Analysis}
In this subsection, 
like in the synthetic sensitivity analysis subsection,
we isolate the effect of the perturbation noise standard deviation $\sigma$ by looking at the results of two datasets separately and examining PVF's performance under all possible choices of $\sigma$. 
The performance difference is defined as the proportion of repetitive experiments in which PVF selects a superior model based on the external test set minus the proportion in which the traditional method does,
across all subsets and splits.

\paragraph{Cervical cancer}
\Cref{fig:cervical-line} shows that the perturbation level $\sigma$ yielding the largest advantage for PVF depends on the evaluation setting. 
For the smallest capacity ($\gamma=0.1$), 
PVF’s margin is positive across the grid, 
increases with the rise of $\sigma$, 
and peaks at $\sigma = 0.01$, 
after which it declines as $\sigma$ further increases. 
For broader capacities ($\gamma\ge 0.3$), 
the difference is small, 
none or even turns negative once $\sigma\ge 0.1$, 
indicating that large perturbations blur the ranking that IE induces and no longer help PVF distinguish candidates. 
For F1, the curve is flat and positive at very small $\sigma$ and drops to zero near $\sigma=0.1$, 
with a small recovery around $\sigma=0.2$ and another dropp to zero if $\sigma$ further increase.
Overall, the most reliable gains for this dataset occur at small $\sigma$ (near $10^{-2}$), 
while larger perturbations reduce PVF’s benefit.

\begin{figure}[tbp]
    \centering
    \includegraphics[width=0.47\textwidth]{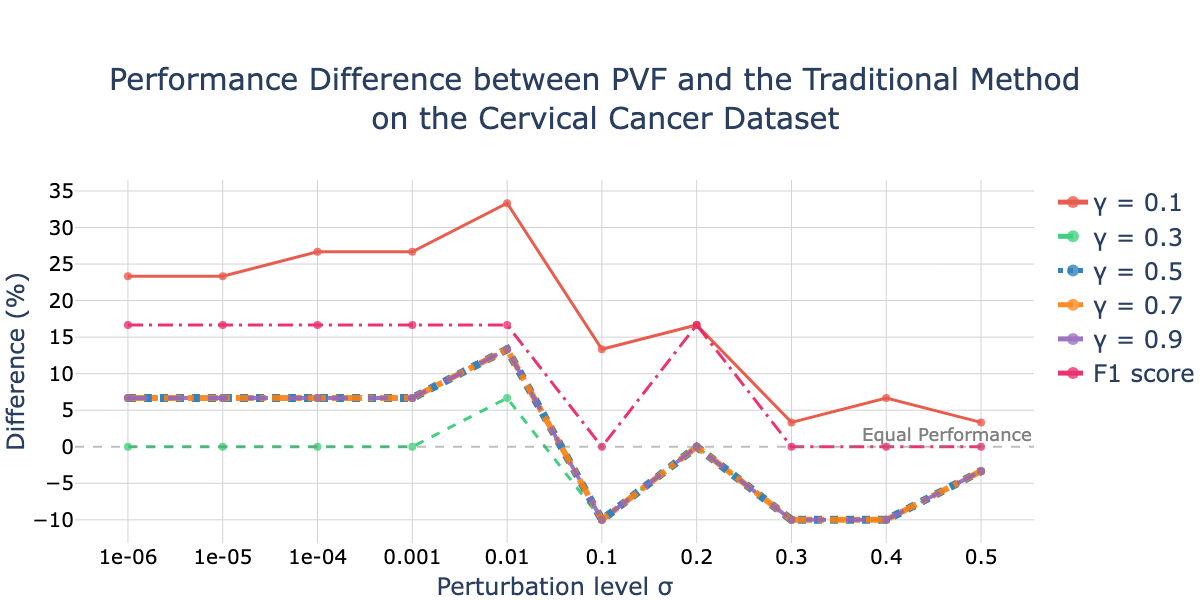}
    \caption{The performance difference between PVF and the traditional method on the cervical cancer dataset across perturbation levels $\sigma$ and evaluation metrics. PVF achieves the most reliable improvements at small $\sigma$ values (around $10^{-2}$), while larger perturbations diminish or reverse its advantage, especially when $\gamma \geq
 0.3$.}
    \label{fig:cervical-line}
\end{figure}

\paragraph{Breast cancer}
\Cref{fig:breast-line} exhibits a different profile. 
Very small perturbations ($\sigma\le 10^{-4}$) yield ties across settings.
As $\sigma$ moves into the $10^{-3}$–$10^{-1}$ range, 
PVF’s advantage increases, 
especially for $\gamma\in\{0.1,0.3\}$ and F1 Score. 
For IE under small $\gamma$ (0.1 and 0.3) and F1 Score, 
the largest differences all occur at $\gamma = 0.3$. 
Beyond 0.3, the curves either flatten or decrease slightly. 
For larger capacities ($\gamma\ge 0.5$), 
the advantage of PVF remains near zero at moderate $\sigma$ and becomes negative only at the largest $\sigma$ values. 
Thus, for this dataset, 
moderate to large perturbation depending on the metric of interest is required for PVF to reveal stable candidates. 
Too little perturbation produces ties, 
while excessive perturbation can erode the margin in the large-capacity settings.

\begin{figure}[tbp]
    \centering
    \includegraphics[width=0.47\textwidth]{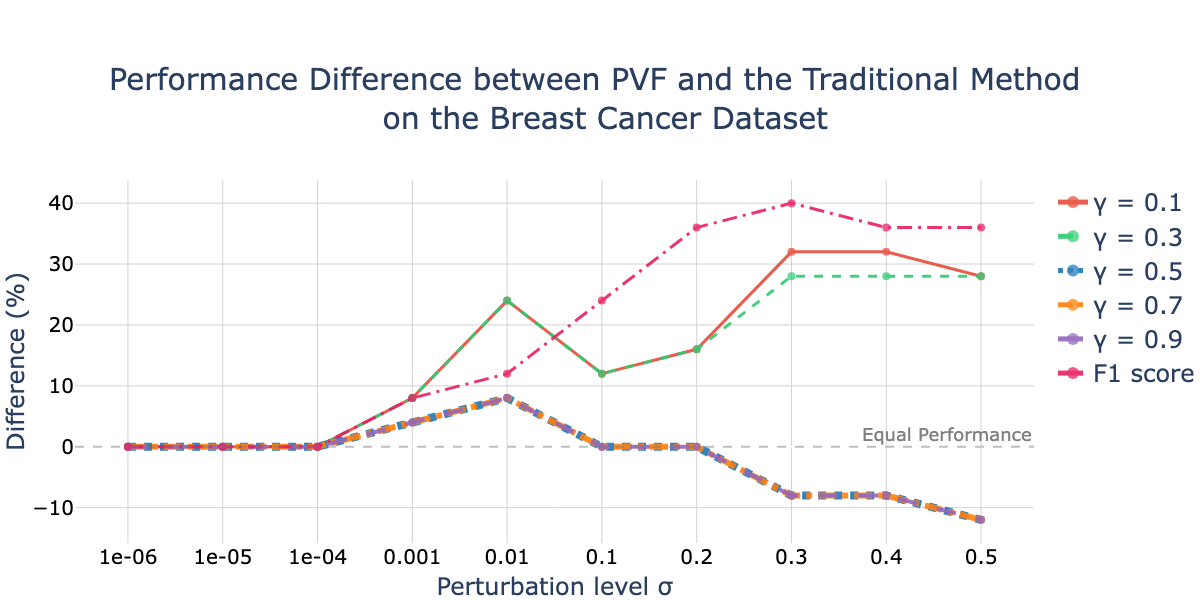}
    \caption{The performance difference between PVF and the traditional method on the breast cancer dataset across perturbation levels $\sigma$ and evaluation metrics. PVF benefits most from moderate perturbations ($\sigma \in [10^{-3},10^{-1}]$), particularly for $\gamma \in \{0.1, 0.3\}$ and F1 Score, whereas very small or very large $\sigma$ values can lead to ties or reduced gains depending on the metric.}
    \label{fig:breast-line}
\end{figure}

\paragraph{Summary of sensitivity analysis for real dataset experiments} 
The two datasets demand different perturbation scales: 
small $\sigma$ is most effective for cervical cancer (with a clear optimum near $10^{-2}$ at $\gamma=0.1$), 
whereas breast cancer benefits from large $\sigma$'s (peaking near $0.3$–$0.4$ for smaller capacities and F1 Score) or moderate $\sigma$'s (peaking near $10^{-2}$ for larger capacities), 
depending on the metric of interest.
These patterns underscore that $\sigma$ should be tuned per dataset and evaluation setting rather than being kept as a fixed prior.
With that being said, empirically, 
we can see that a moderate perturbation (e.g., $\gamma=10^{-2}$) generally applies well to both datasets, 
which could be a starting point for other similar datasets.

\end{document}